\newcommand\bcmdtab{\noindent\bgroup\tabcolsep=0pt%
  \begin{tabular}{@{}p{10pc}@{}p{20pc}@{}}}
\newcommand\ecmdtab{\end{tabular}\egroup}
  \title[Theory and Practice of Logic Programming]
        {Complexity and Compilation of GZ-Aggregates in Answer Set Programming}
  \author[M. Alviano and N. Leone]
         {MARIO ALVIANO and NICOLA LEONE\\
         Department of Mathematics and Computer Science, University of Calabria, Italy}
\newtheorem{definition}{Definition}
\newtheorem{example}{Example}
\newcommand{\A}{\ensuremath{\mathcal{U}}\xspace}
\newcommand{\T}{\ensuremath{\mathbf{T}}\xspace}
\newcommand{\F}{\ensuremath{\mathbf{F}}\xspace}
\newcommand{\naf}{\ensuremath{\raise.17ex\hbox{\ensuremath{\scriptstyle\mathtt{\sim}}}}\xspace}
\newcommand{\COUNT}{\ensuremath{\textsc{count}}\xspace}
\newcommand{\SUM}{\ensuremath{\textsc{sum}}\xspace}
\newcommand{\AVG}{\ensuremath{\textsc{avg}}\xspace}
\newcommand{\MIN}{\ensuremath{\textsc{min}}\xspace}
\newcommand{\MAX}{\ensuremath{\textsc{max}}\xspace}
\newcommand{\ODD}{\ensuremath{\textsc{odd}}\xspace}
\newcommand{\EVEN}{\ensuremath{\textsc{even}}\xspace}
\DeclarePairedDelimiter\norm{\lVert}{\rVert}
\begin{document}

\label{firstpage}

\maketitle

\begin{abstract}
Gelfond and Zhang recently proposed a new stable model semantics based on Vicious Circle Principle in order to improve the interpretation of logic programs with aggregates.
The paper focuses on this proposal, and analyzes the complexity of both coherence testing and cautious reasoning under the new semantics.
Some surprising results highlight similarities and differences versus mainstream stable model semantics for aggregates.
Moreover, the paper reports on the design of compilation techniques for implementing the new semantics on top of existing ASP solvers, which eventually lead to realize a prototype system that allows for experimenting with Gelfond-Zhang's aggregates.
\end{abstract}

\begin{keywords}
answer set programming;
aggregates;
complexity;
compilation.
\end{keywords}


\section{Introduction}

Answer set programming (ASP) is a declarative language for knowledge representation and reasoning \cite{DBLP:journals/cacm/BrewkaET11}.
ASP specifications are sets of logic rules, possibly using disjunction and default negation, interpreted according to the stable model semantics \cite{DBLP:conf/iclp/GelfondL88,DBLP:journals/ngc/GelfondL91}.
The basic language is extended by several constructs to ease the representation of practical knowledge.
Aggregate functions are among these extensions \cite{DBLP:journals/ai/SimonsNS02,DBLP:journals/ai/LiuPST10,DBLP:journals/ai/FaberPL11,DBLP:conf/aaaiss/BartholomewLM11}, and allow to express properties on sets of atoms declaratively.
For example, aggregate functions are often used to enforce \emph{functional dependencies};
a rule of the following form:
\begin{equation*}
 \bot \leftarrow R'(\overline{X}),\ \COUNT[\overline{Y} : R(\overline{X},\overline{Y},\overline{Z})] \leq 1
\end{equation*}
constrains relation $R$ to satisfy the functional dependency $\overline{X} \rightarrow \overline{Y}$, where $\overline{X} \cup \overline{Y} \cup \overline{Z}$ is the set of attributes of $R$, and $R'$ is the projection of $R$ on $\overline{X}$.
Aggregate functions are also commonly used in ASP to constrain a nondeterministic guess.
For example, in the \emph{knapsack problem} the total weight of the selected items must not exceed a given limit, which can be modeled by the following rule aggregating over a multiset:
\begin{equation*}
 \bot \leftarrow \SUM[W,O : \mathit{object}(O,W,C), \mathit{in}(O)] \leq \mathit{limit}.
\end{equation*}

Aggregate functions may also ease the representation of logic circuits made of gates of unbounded fan-in \cite{DBLP:journals/tplp/GelfondZ14};
the following rule models that the output of an XOR gate is 1 if an odd number of its inputs have value 1:
\begin{equation*}
  \mathit{value}(O,1) \leftarrow \mathit{xor}(G), \mathit{output}(G,O),
  \ODD[I : \mathit{input}(G,I), \mathit{value}(I,1)].
\end{equation*}

Several semantics were proposed for ASP programs with aggregates.
Two of them \cite{DBLP:journals/tocl/Ferraris11,DBLP:journals/ai/FaberPL11} are implemented in popular ASP solvers \cite{DBLP:journals/ai/GebserKS12,DBLP:journals/tplp/FaberPLDI08}.
These two semantics agree for programs without negated aggregates, and are referred in this paper as F-stable model semantics.
An alternative semantics, presented at the 30th International Conference on Logic Programming (ICLP'14) by \cite{DBLP:journals/tplp/GelfondZ14} and here referred to as GZ- or G-stable model semantics, is based on the notion of \emph{vicious circle principle}, which essentially asserts that the truth of an atom must be inferred by means of a definition not referring, directly or indirectly, to the truth of the atom itself.

The present paper explores this new semantics, reporting a detailed complexity analysis of \emph{coherence testing} and \emph{cautious reasoning} \cite{DBLP:journals/amai/EiterG95}, two of the main computational tasks in ASP.
In a nutshell, coherence testing amounts to check the existence of a stable model of an input program, while cautious reasoning consists in checking whether a given atom is true in all stable models of a program.

Concerning coherence testing, membership in $\Sigma^P_2$ was proved in \cite{DBLP:journals/tplp/GelfondZ14}, and $\Sigma^P_2$-hardness is proved here already for negation-free programs with a very limited form of aggregate functions, referred to as \emph{monotone} aggregates in the literature.
This result is in contrast with F-stable model semantics, for which coherence of negation-free programs with monotone aggregates is guaranteed.
Whether this must be considered a strength or a weakness of G-stable models is not the focus of this paper, but we remark here that the increase in complexity also comes with a higher expressive power in this case:
aggregates referred to as monotone in the literature allow to simulate integrity constraints and possibly default negation when interpreted according to the semantics by \cite{DBLP:journals/tplp/GelfondZ14}.
Moreover, there are also many cases in which G-stable models actually decrease the complexity of the reasoning tasks.
In fact, while for F-stable model semantics coherence testing is $\Sigma^P_2$-hard already for disjunction-free programs, this computational task is proved to be NP-complete for these programs under G-stable model semantics.
Finally, P-completeness is proved for programs with monotone aggregates if disjunction and negation are not used, a result compatible with F-stable model semantics.
However, also in this case G-stable models allow to simulate integrity constraints, which is not possible with F-stable models.

As for the complexity of cautious reasoning, membership and hardness in the complementary complexity classes are proved for all the analyzed fragments of the language.
These complexity results also implicitly characterize the computational complexity of \emph{brave reasoning}, another common reasoning task in ASP which consists in checking whether a given propositional atom is true in some stable model of an input program.
In fact, brave reasoning has the same complexity of coherence testing under G-stable model semantics, while this is not necessarily the case for F-stable models.
Again, the reason for this discrepancy is the power of G-stable models to simulate integrity constraints, as this is the additional construct that is commonly used for reducing brave reasoning to coherence testing.

Further results in the paper are two rewriting techniques for compiling programs interpreted according to G-stable semantics into programs interpreted according to F-stable semantics.
The first rewriting is simpler and introduces fewer auxiliary symbols, while the second has the advantage of producing programs with \emph{non recursive} aggregates only.
Both rewritings are \emph{polynomial, faithful and modular} translation functions \cite{DBLP:journals/jancl/Janhunen06}, and are implemented in a system prototype.
It is publicly available (\url{http://alviano.net/software/g-stable-models/}) and allows for experimenting with this newly proposed semantics.

\section{Background}\label{sec:background}

After defining the syntax of logic programs with aggregates, two semantics are introduced, referred to as F- \cite{DBLP:journals/tocl/Ferraris11,DBLP:journals/ai/FaberPL11} and G-stable models \cite{DBLP:journals/tplp/GelfondZ14}.
It is remarked here, and clarified in Section~\ref{sec:related}, that the original definitions are properly adapted to better fit the results in this paper.

\paragraph{Syntax.}
Let \T,\F denote the Boolean truth values true and false, respectively.
Let \A be a finite set of propositional atoms.
An aggregate atom $A$ is a Boolean function whose domain, denoted $\mathit{dom}(A)$, is a subset of \A.
A literal is a propositional atom, or a propositional atom preceded by (one or more occurrences of) the negation as failure symbol \naf, or an aggregate atom.
A rule $r$ is of the following form:
\begin{equation}\label{eq:rule}
 p_1 \vee \cdots \vee p_m \leftarrow l_1, \ldots, l_n
\end{equation}
where $p_1,\ldots,p_m$ are propositional atoms, $l_1,\ldots,l_n$ are literals, $m \geq 1$ and $n \geq 0$.
Set $\{p_1,\ldots,p_m\}$ is the head of $r$, denoted $H(r)$, and set $\{l_1,\ldots,l_n\}$ is the body of $r$, denoted $B(r)$.
A program $\Pi$ is a finite set of rules of the form (\ref{eq:rule}).
The set of propositional atoms occurring in $\Pi$ is denoted $At(\Pi)$.

\paragraph{Semantic notions.}
An interpretation $I$ is a subset of \A.
Let $S,S'$ be sets of interpretations, and $C$ be a set of propositional atoms.
Sets $S$ and $S'$ are equivalent in the context $C$, denoted $S \equiv_C S'$, if $|S| = |S'|$ and $\{I \cap C \mid I \in S\} = \{I \cap C \mid I \in S'\}$.
Aggregates are usually classified in three groups \cite{DBLP:journals/jair/LiuT06}:
an aggregate $A$ is \emph{monotone} if $A(I) = \T$ implies $A(J) = \T$, for all $I \subseteq J \subseteq \A$;
an aggregate $A$ is \emph{convex} if $A(I) = A(K) = \T$ implies $A(J) = \T$, for all $I \subseteq J \subseteq K \subseteq \A$;
the remaining aggregates are called \emph{non-convex}.
Note that monotone aggregates are convex, and the inclusion is strict.
Relation $\models$ is inductively defined as follows:
for a propositional atom $p \in \A$, $I \models p$ if $p \in I$;
for a negated literal $\naf l$, $I \models \naf l$ if $I \not\models l$;
for an aggregate atom $A$, $I \models A$ if $A(I \cap \mathit{dom}(A)) = \T$;
for a set or conjunction $C$, $I \models C$ if $I \models p$ holds for each $p \in C$;
for a rule $r$, $I \models r$ if $H(r) \cap I \neq \emptyset$ whenever $I \models B(r)$.
$I$ is a \emph{model} of a program $\Pi$ if $I \models \Pi$, i.e., if $I \models r$ for all $r \in \Pi$.

\begin{example}\label{ex:syntax}\label{ex:model}
Let $I$ be an interpretation, and $k \geq 1$.
An aggregate $A$ such that $A(I)$ equals $|\mathit{dom}(A) \cap I| \geq k$ is monotone.
An aggregate $A$ such that $A(I)$ equals $|\mathit{dom}(A) \cap I| = k$ is convex.
An aggregate $A$ such that $A(I)$ equals $|\mathit{dom}(A) \cap I| \neq k$ is non-convex.

Let $A_1$ be an aggregate such that $\mathit{dom}(A_1) = \{a,b\}$ and $A_1(I)$ equals $|\{a,b\} \cap I| \geq 1$, for all interpretations $I$.
A program using $A_1$ is 
$\Pi_1 = \{a \leftarrow \naf\naf a;\ b \vee c \leftarrow A_1\}$.
The models of $\Pi_1$, restricted to the atoms occurring in the program, are the following:
$\emptyset$, $\{a,b\}$, $\{a,c\}$, $\{a,b,c\}$, $\{b\}$, $\{b,c\}$, and $\{c\}$.
\hfill $\blacksquare$
\end{example}

\paragraph{F-stable models.}
Let $\Pi$ be a program and $I$ an interpretation.
The F-reduct of $\Pi$ with respect to $I$ is defined as follows:
$F(\Pi,I) = \{F(r,I) \mid r \in \Pi,\ I \models B(r)\}$, where
$F(r,I) = p_1 \vee \cdots \vee p_m \leftarrow F(l_1,I), \ldots, F(l_n,I)$ for $r$ being of the form (\ref{eq:rule}),
$F(l,I) = l$ if $l$ is a propositional atom or an aggregate atom $A$, and
$F(l,I) = \emptyset$ if $l$ is a negative literal.
$I$ is an F-stable model of $\Pi$ if $I \models \Pi$ and there is no $J \subset I$ such that $J \models F(\Pi,I)$.
The set of F-stable models of $\Pi$ is denoted $\mathit{FSM}(\Pi)$.

\paragraph{G-stable models.}
Let $\Pi$ be a program and $I$ an interpretation.
The G-reduct of $\Pi$ with respect to $I$ is defined as follows:
$G(\Pi,I) = \{G(r,I) \mid r \in \Pi,\ I \models B(r)\}$, where
$G(r,I) = p_1 \vee \cdots \vee p_m \leftarrow G(l_1,I), \ldots, G(l_n,I)$ for $r$ being of the form (\ref{eq:rule}),
$G(l,I) = p$ if $l$ is a propositional atom $p$,
$G(l,I) = I \cap \mathit{dom}(A)$ if $l$ is an aggregate atom $A$, and
$G(l,I) = \emptyset$ if $l$ is a negative literal.
$I$ is a G-stable model of $\Pi$ if $I \models \Pi$ and there is no $J \subset I$ such that $J \models G(\Pi,I)$.
The set of G-stable models of $\Pi$ is denoted $\mathit{GSM}(\Pi)$.

\begin{example}\label{ex:flp}\label{ex:gz}
The F-stable models of $\Pi_1$ in Example~\ref{ex:syntax} are the following:
$\emptyset$, $\{a,b\}$, and $\{a,c\}$.
Indeed, note that $F(\Pi_1,\emptyset) = \emptyset$,
$F(\Pi_1,\{a,b\}) = F(\Pi_1,\{a,c\}) = \{a \leftarrow;\ b \vee c \leftarrow A_1\}$,
and each model is minimal for its reduct.
On the other hand, $\{b\}$ is not an F-stable model because $\emptyset$ is a model of
$F(\Pi_1,\{b\}) = \{b \vee c \leftarrow A_1\}$.
The G-stable models of $\Pi_1$ are the following:
$\emptyset$ and $\{a,c\}$.
Indeed, $G(\Pi_1,\emptyset) = \emptyset$ and
$G(\Pi_1,\{a,c\}) = \{a \leftarrow;\ b \vee c \leftarrow a\}$.
Note that $A_1$ is replaced by $a$ in the last rule of $G(\Pi_1,\{a,c\})$ because $\{a,c\} \cap \mathit{dom}(A_1) = \{a\}$.
Also observe that $\{a,b\}$ is not a G-stable model because 
$G(\Pi_1,\{a,b\}) = \{a \leftarrow;\ b \vee c \leftarrow a, b\}$,
and $\{a\}$ is a model of this reduct.
\hfill $\blacksquare$
\end{example}

\paragraph{Computational problems.}
Let $X \in \{F,G\}$.
A program $\Pi$ is \emph{$X$-coherent} if $\Pi$ has at least one $X$-stable model;
otherwise, $\Pi$ is \emph{$X$-incoherent}.
\emph{$X$-coherence testing} is the computational problem of checking whether an input program $\Pi$ is $X$-coherent.
A propositional atom $p$ is an \emph{$X$-cautious consequence} of $\Pi$ if $p$ belongs to all $X$-stable models of $\Pi$.
\emph{$X$-cautious reasoning} is the computational problem of checking whether a given atom $p$ is an $X$-cautious consequence of an input program $\Pi$.

\section{Complexity}\label{sec:complexity}

Complexity of F-cautious reasoning, and implicitly also of F-coherence testing, was analyzed in \cite{DBLP:journals/ai/FaberPL11}. 
A similar analysis is reported here for G-stable semantics, and in particular the combination of monotone (M), convex (C) and non-convex (N) aggregates with negation (\naf) and disjunction ($\vee$) is analyzed.
A summary of results is shown in Table~\ref{tab:complexity}, where all complexity bounds are tight.
Note that in some cases the existence of a stable model is guaranteed, and hence constant complexity $\mathbf{K}$ is reported.
Throughout this section, aggregates are assumed to be polynomial-time computable functions, and ASP($X$) will denote the class of programs using constructs in the list $X$.
For example, ASP($\neg$, C) is the class of programs possibly using negation and convex aggregates, while ASP($-$) is the class of programs not using negation, disjunction or aggregates.

\begin{table}[b]
 \caption{
    Complexity of G-coherence testing and G-cautious reasoning.
    An $\uparrow$ denotes an increase in complexity with respect to F-stable model semantics, where the considered complexity classes are $\mathbf{K} \subseteq \text{P} \subseteq \text{NP} \subseteq \Sigma^P_2$, and $\mathbf{K} \subseteq \text{P} \subseteq \text{co-NP} \subseteq \Pi^P_2$.
    Similarly, $\downarrow$ denotes a decrease in complexity.
 }\label{tab:complexity}
 \begin{center}
 \begin{tabular}{rrrrrrrrrr}
    \toprule
       & \multicolumn{4}{c}{\textsc{coherence testing}}                                                  & \multicolumn{4}{c}{\textsc{cautious reasoning}} \\
   \cmidrule{2-5}\cmidrule{6-9}
       & $\{\}$ \phantom{$\uparrow$}   & $\{\naf\}$ \phantom{$\uparrow$} & $\{\vee\}$ \phantom{$\uparrow\uparrow\uparrow$} & $\{\naf,\vee\}$                       & $\{\}$  \phantom{$\uparrow$}   & $\{\naf\}$  \phantom{$\uparrow$} & $\{\vee\}$ \phantom{$\uparrow$} & $\{\naf,\vee\}$ \\
   \cmidrule{1-9}
   --- & $\mathbf{K}$ \phantom{$\uparrow$} & NP \phantom{$\uparrow$} & $\mathbf{K}$ \phantom{$\uparrow\uparrow\uparrow$} & $\Sigma^P_2$ \phantom{l}           & P    \phantom{$\uparrow$}      & co-NP    \phantom{$\uparrow$}   & co-NP    \phantom{$\uparrow$}   & $\Pi^P_2$   \phantom{l}           \\
   M   & P $\uparrow$        & NP \phantom{$\uparrow$} & $\Sigma^P_2$ $\uparrow\uparrow\uparrow$     & $\Sigma^P_2$ \phantom{l}                             & P      \phantom{$\uparrow$}    & co-NP    \phantom{$\uparrow$}   & \phantom{$\uparrow$} $\Pi^P_2$ $\uparrow$ & $\Pi^P_2$    \phantom{l}           \\
   C   & NP \phantom{$\uparrow$} & NP \phantom{$\uparrow$}  & $\Sigma^P_2$ \phantom{$\uparrow\uparrow\uparrow$} & $\Sigma^P_2$ \phantom{l}                  & co-NP  \phantom{$\uparrow$}    & co-NP    \phantom{$\uparrow$}   & $\Pi^P_2$   \phantom{$\uparrow$}    & $\Pi^P_2$    \phantom{l}          \\
   N & NP $\downarrow$ & \phantom{$\downarrow$} NP $\downarrow$  & $\Sigma^P_2$ \phantom{$\uparrow\uparrow\uparrow$} & $\Sigma^P_2$ \phantom{l}         & co-NP $\downarrow$ & \phantom{$\downarrow$} co-NP $\downarrow$  & $\Pi^P_2$  \phantom{$\uparrow$} & $\Pi^P_2$  \phantom{l}           \\
   \bottomrule
 \end{tabular}
 \end{center}
\end{table}

\subsection{Complexity of Coherence Testing}

Complexity of coherence testing for programs without aggregates, reported in the first row of Table~\ref{tab:complexity}, is well-known (see for example \cite{DBLP:journals/amai/EiterG95}).
Membership in $\Sigma^P_2$ is implicit in \cite{DBLP:journals/tplp/GelfondZ14} for the general case.
For the other membership results, the immediate consequence operator is used.

\begin{definition}\label{def:tp}
Let $\Pi$ be a program and $I$ an interpretation.
The immediate consequence operator $T_\Pi$ is defined as follows:
$T_\Pi(I) = \{p \in H(r) \mid r \in \Pi, I \models B(r)\}$.
\end{definition}

For an ASP(M) program $\Pi$, $T_\Pi$ is monotone and therefore has a least fixpoint, and this fixpoint is computable in polynomial-time because a single application requires linear-time, and at most $|At(\Pi)|$ applications are required to reach the fixpoint.
Moreover, G-stable models of ASP(M) programs can be characterized in terms of $T_\Pi$, from which P-membership follows.

\begin{restatable}{lemma}{LemTpPoly}\label{lem:tp-poly}\label{lem:tp-sem}
Let $\Pi$ be in ASP(M).
The least fixpoint of $T_\Pi(I)$ exists and is polytime computable.
Let $I$ be the least fixpoint of $T_\Pi$, and $J$ be the least fixpoint of $T_{G(\Pi,I)}$.
If $I \neq J$ then $\Pi$ is G-incoherent, otherwise $\mathit{GSM}(\Pi) = \{I\}$.
\end{restatable}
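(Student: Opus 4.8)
The plan is to establish three facts and then combine them. First, for the existence and polynomial-time computability of the least fixpoint of $T_\Pi$: since an ASP(M) program has neither negation nor disjunction and all its aggregate atoms are monotone, whenever $I' \subseteq J'$ and $I' \models B(r)$ we also have $J' \models B(r)$, so $T_\Pi$ is monotone on the finite lattice $2^{At(\Pi)}$; by Knaster--Tarski it has a least fixpoint, which is obtained by iterating $T_\Pi$ from $\emptyset$ in at most $|At(\Pi)|$ steps, each step running in polynomial time because the aggregates are polynomial-time computable. Along the way I would record the standard observations that any fixpoint of $T_\Pi$ is a model of $\Pi$ and that any model of $\Pi$ is a pre-fixpoint of $T_\Pi$ (here it matters that rule heads are singletons), so that the least fixpoint $I$ of $T_\Pi$ is the $\subseteq$-least model of $\Pi$.

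Second, I would note that for every interpretation $M$, the reduct $G(\Pi,M)$ is a definite program: each aggregate atom $A$ in a body has been replaced by the conjunction of the atoms in $M \cap \mathit{dom}(A)$, and there is no negation or disjunction. Hence $T_{G(\Pi,M)}$ is monotone and its least fixpoint is the $\subseteq$-least model of $G(\Pi,M)$. Moreover, if $M \models \Pi$ then $M \models G(\Pi,M)$ trivially, because each reduct rule $G(r,M)$ comes from a rule $r$ with $M \models B(r)$, so its head atom already belongs to $M$; in particular $J \subseteq I$, where $J$ is the least fixpoint of $T_{G(\Pi,I)}$.

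The heart of the proof is the claim that \emph{every} G-stable model $M$ of $\Pi$ coincides with $I$. From $M \models \Pi$ and the minimality of $I$ we get $I \subseteq M$. I would then show $I \models G(\Pi,M)$: pick a reduct rule $G(r,M)$ with $r \in \Pi$ and $M \models B(r)$, and assume $I$ satisfies the body of $G(r,M)$; then $I$ contains every propositional atom occurring in $B(r)$ and, for each aggregate $A \in B(r)$, contains $M \cap \mathit{dom}(A)$, so $M \cap \mathit{dom}(A) \subseteq I \cap \mathit{dom}(A)$; since $M \models A$ gives $A(M \cap \mathit{dom}(A)) = \T$, \emph{monotonicity of $A$} yields $A(I \cap \mathit{dom}(A)) = \T$, i.e.\ $I \models A$; hence $I \models B(r)$, and since $I \models \Pi$ the head of $r$ lies in $I$, so $I \models G(r,M)$. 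Because $M$ is a G-stable model, no proper subset of $M$ is a model of $G(\Pi,M)$, and $I \subseteq M$ together with $I \models G(\Pi,M)$ forces $I = M$. This is the step I expect to be the real obstacle, and it is exactly where monotonicity is indispensable: for merely convex (let alone non-convex) aggregates the inference $A(M \cap \mathit{dom}(A)) = \T \Rightarrow A(I \cap \mathit{dom}(A)) = \T$ need not hold.

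Finally I would split on whether $I = J$. If $I = J$, then $I$ is the least, hence the unique minimal, model of $G(\Pi,I)$, so no proper subset of $I$ models $G(\Pi,I)$; with $I \models \Pi$ this gives $I \in \mathit{GSM}(\Pi)$, and by the claim $\mathit{GSM}(\Pi) = \{I\}$. If $I \neq J$, then $J \subsetneq I$ and $J \models G(\Pi,I)$ witness that $I \notin \mathit{GSM}(\Pi)$; by the claim no other interpretation is G-stable either, so $\mathit{GSM}(\Pi) = \emptyset$ and $\Pi$ is G-incoherent.
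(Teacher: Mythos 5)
Your proof is correct and follows essentially the same route as the paper's: compute the least fixpoint $I$ of $T_\Pi$ by iteration from $\emptyset$, use monotonicity of the aggregates to show that $I$ models the reduct $G(\Pi,K)$ of any larger model $K$ (so $I$ is the only candidate G-stable model), and then decide coherence by comparing $I$ with the least fixpoint $J$ of $T_{G(\Pi,I)}$. The only difference is that you spell out the Knaster--Tarski and aggregate-monotonicity details that the paper's appendix states without elaboration (in particular its bare assertion ``note that $I \models G(\Pi,K)$'').
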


\begin{restatable}{theorem}{ThmGcoDefP}\label{thm:gco-def-p}
G-coherence testing is in P for ASP(M).
\end{restatable}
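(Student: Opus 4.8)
The plan is to read off a polynomial-time decision procedure directly from Lemma~\ref{lem:tp-sem}. Given an input program $\Pi \in$ ASP(M), the procedure first computes the least fixpoint $I$ of $T_\Pi$, then forms the G-reduct $G(\Pi,I)$, then computes the least fixpoint $J$ of $T_{G(\Pi,I)}$, and finally accepts if and only if $I = J$. Correctness is immediate from Lemma~\ref{lem:tp-sem}: if $I \neq J$ then $\Pi$ is G-incoherent, whereas if $I = J$ then $\mathit{GSM}(\Pi) = \{I\} \neq \emptyset$, so $\Pi$ is G-coherent; and by definition $\Pi$ is G-coherent precisely when it has a G-stable model. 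Hence this acceptance condition characterises exactly the G-coherent programs in ASP(M).

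What remains is to verify that each step runs in polynomial time. Computing $I$ is polynomial by Lemma~\ref{lem:tp-poly}. Computing $G(\Pi,I)$ is polynomial as well: since programs in ASP(M) use neither disjunction nor negation, building $G(\Pi,I)$ only requires, for each rule $r$, testing whether $I \models B(r)$ and replacing each aggregate atom $A$ occurring in $B(r)$ by the set of atoms $I \cap \mathit{dom}(A)$; because aggregates are assumed to be polynomial-time computable functions, all of these tests and replacements take polynomial time, and the resulting program has size polynomial in the size of $\Pi$. Crucially, $G(\Pi,I)$ contains no aggregates, no negation and no disjunction, so it is itself a program in ASP(M); therefore Lemma~\ref{lem:tp-poly} applies once more and guarantees that $J$ is computable in polynomial time. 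Putting the three steps together, the whole procedure is polynomial.

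The substantive content is entirely carried by Lemma~\ref{lem:tp-sem}, which we take as given; what is left here is essentially bookkeeping — noting that the ASP(M) restriction rules out disjunction and negation (so reducts are aggregate-free definite programs that again lie in ASP(M)) and that the polynomial-time computability assumption on aggregates makes the reduct efficiently constructible. I therefore do not expect any genuine obstacle at this point: the theorem is a direct corollary of Lemma~\ref{lem:tp-sem} combined with the polynomiality of forming the G-reduct and of least-fixpoint computation.
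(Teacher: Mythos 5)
Your proposal is correct and follows essentially the same route as the paper: compute the least fixpoint $I$ of $T_\Pi$, build $G(\Pi,I)$, compute the least fixpoint of $T_{G(\Pi,I)}$, and accept iff the two coincide, with correctness delegated to Lemma~\ref{lem:tp-sem} and polynomiality to Lemma~\ref{lem:tp-poly}. Your additional remarks (that the reduct is efficiently constructible under the polynomial-time aggregate assumption and is an aggregate-, negation- and disjunction-free program to which Lemma~\ref{lem:tp-poly} applies) only make explicit what the paper leaves implicit.
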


To obtain NP-membership in the disjunction-free case, the following algorithm is used:
Guess a model $I$ of $\Pi$ and check that $I$ is a minimal model of $G(\Pi,I)$.
Checking that $I$ is a model of $\Pi$ and that $I$ is minimal for $G(\Pi,I)$ is polynomial-time doable (note that $G(\Pi,I)$ is in ASP($-$) and hence Lemma~\ref{lem:tp-poly} can be used).

\begin{restatable}{theorem}{ThmGcoNormalNp}\label{thm:gco-normal-np}
G-coherence testing is in NP for programs in ASP($\neg$, M, C, N).
\end{restatable}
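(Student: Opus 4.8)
The plan is to establish NP-membership via the guess-and-check scheme already outlined before the statement: an NP procedure nondeterministically guesses an interpretation $I \subseteq At(\Pi)$ and then verifies in polynomial time that $I$ is a G-stable model of $\Pi$. The key structural observation is that, since the programs in ASP($\neg$, M, C, N) are disjunction-free, every rule head is a single atom, and forming the G-reduct replaces each aggregate literal $A$ occurring in a body by the set $I \cap \mathit{dom}(A)$ of ordinary atoms and deletes every negative literal; hence $G(\Pi,I)$ is a definite Horn program, i.e., it belongs to ASP($-$), and Lemma~\ref{lem:tp-poly} applies to it.

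First I would argue that checking $I \models \Pi$ is feasible in polynomial time: satisfaction of ordinary and negated literals is immediate, and satisfaction of an aggregate literal $A$ amounts to evaluating the Boolean function $A$ on $I \cap \mathit{dom}(A)$, which is polynomial under the standing assumption that aggregates are polynomial-time computable. Second, I would show that the minimality test ``there is no $J \subset I$ with $J \models G(\Pi,I)$'' is polynomial as well. For this I would use that the definite Horn program $G(\Pi,I)$ has a unique least model, which coincides with the least fixpoint of $T_{G(\Pi,I)}$ (Definition~\ref{def:tp}), and that by Lemma~\ref{lem:tp-poly} this least fixpoint is computable in polynomial time. Writing $J^\ast$ for this least model, the algorithm accepts iff $I \models \Pi$ and $J^\ast = I$.

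Correctness rests on the characterization that $I \in \mathit{GSM}(\Pi)$ iff $I \models \Pi$ and $J^\ast = I$. The nontrivial half is that $I \models \Pi$ implies $I \models G(\Pi,I)$: for each $r$ with $I \models B(r)$, every body literal of $G(r,I)$ is satisfied by $I$ — atoms because $I \models B(r)$, the sets $I \cap \mathit{dom}(A)$ because they are subsets of $I$, and the empty bodies arising from negative literals trivially — so $I$ must contain the single head atom, which it does since $I \models r$. Hence $J^\ast \subseteq I$, and the nonexistence of a proper submodel of $G(\Pi,I)$ below $I$ is equivalent to $J^\ast = I$; conversely, if $J^\ast = I$ then no proper subset of $I$ models $G(\Pi,I)$. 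Combining these gives a polynomial-time verifier for a polynomial-size certificate, i.e., NP-membership.

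I expect the only delicate point to be the bookkeeping that the G-reduct of a disjunction-free ASP($\neg$, M, C, N) program really lands in ASP($-$), together with the soundness of replacing the semantic minimality condition by the fixpoint computation of Lemma~\ref{lem:tp-poly}; beyond that there is no computational obstacle, since guessing $I$, evaluating the aggregates, and computing the least fixpoint are each routine.
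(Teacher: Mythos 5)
Your proposal is correct and follows essentially the same route as the paper: guess an interpretation, check $I \models \Pi$, build $G(\Pi,I)$, observe it lies in ASP($-$), and use Lemma~\ref{lem:tp-poly} to compute its least model $J$ in polynomial time, accepting iff $J = I$. The extra correctness bookkeeping you supply (that $I \models \Pi$ implies $I \models G(\Pi,I)$, so least-model equality is equivalent to the minimality condition) is sound and merely makes explicit what the paper leaves implicit.
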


As for the hardness, it is known that coherence testing is NP-hard if negation is present \cite{DBLP:journals/csur/DantsinEGV01}, while adding also disjunction increases the hardness to $\Sigma^P_2$ \cite{DBLP:journals/amai/EiterG95}.
These results propagate top-down in Table~\ref{tab:complexity}.
The missing results for programs with only convex or only non-convex aggregates can be obtained by the following transformations from aggregate-free programs with negation to negation-free programs with aggregates.

\begin{definition}\label{def:simple-rewritings}
Let $\Pi$ be in ASP($\neg$, $\vee$).
Let $C(\Pi)$ be the program obtained from $\Pi$ by replacing every occurrence of a negative literal $\naf p$ with an aggregate $A$ such that $\mathit{dom}(A) = \{p\}$ and $A(I) = |\{p\} \cap I| \leq 0$, for all $I \subseteq \A$.
Let $N(\Pi)$ be the program obtained from $\Pi$ by replacing every occurrence of a negative literal $\naf p$ with an aggregate $A$ such that $\mathit{dom}(A) = \{p,\bot\}$ and $A(I) = |\{p,\bot\} \cap I| \neq 1$, for all $I \subseteq \A$, where $\bot$ is a fixed atom not occurring in $\Pi$.
\end{definition}

\begin{restatable}{lemma}{LemSimpleRewritings}\label{lem:simple-rewritings}
Let $\Pi$ be in ASP($\neg$, $\vee$).
Then, 
$\mathit{GSM}(\Pi) \equiv_{At(\Pi)} \mathit{GSM}(C(\Pi)) \equiv_{At(\Pi)} \mathit{GSM}(N(\Pi))$.
\end{restatable}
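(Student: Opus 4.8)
The plan is to establish the stronger claim that $\mathit{GSM}(\Pi) = \mathit{GSM}(C(\Pi)) = \mathit{GSM}(N(\Pi))$ as \emph{sets} of interpretations; the stated relations $\equiv_{At(\Pi)}$ then follow trivially, since equal sets have equal cardinality and equal restrictions to any context. The single fact driving the argument is that an aggregate atom influences both $\models$ and the G-reduct only through $I \cap \mathit{dom}(A)$, so it suffices to compare, rule by rule, (i) when $\naf p$ and its replacement are satisfied by the same interpretations and (ii) what each contributes to the reduct of a rule triggered by $I$. Here one uses crucially that $\Pi$ lies in ASP($\neg,\vee$) and hence is aggregate-free, so its rule bodies are conjunctions of atoms and negated atoms and the comparison stays purely syntactic.

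For $C(\Pi)$ the argument is direct: the aggregate $A$ replacing $\naf p$ (with $\mathit{dom}(A)=\{p\}$) is satisfied by $I$ exactly when $p\notin I$, i.e.\ exactly when $I\models\naf p$. Hence a rule $r$ of $\Pi$ and its rewrite $r'$ in $C(\Pi)$ satisfy $I\models B(r)\iff I\models B(r')$ for every $I$, so $\Pi$ and $C(\Pi)$ have the same models and the same rules triggered by any $I$; and whenever $I$ triggers $r$ one has $p\notin I$ for each $\naf p$ in $r$, so the contribution $\emptyset$ of $\naf p$ to $G(\Pi,I)$ coincides with the contribution $I\cap\{p\}=\emptyset$ of $A$ to $G(C(\Pi),I)$. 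As $C$ alters nothing else in a rule, $G(\Pi,I)=G(C(\Pi),I)$ for every $I$, whence $\mathit{GSM}(\Pi)=\mathit{GSM}(C(\Pi))$. For $N(\Pi)$ the identical computation works for every interpretation $I$ with $\bot\notin I$: there the replacement aggregate $A$ (with $\mathit{dom}(A)=\{p,\bot\}$ and $A(I)=(|I\cap\{p,\bot\}|\neq 1)$) is again satisfied iff $p\notin I$, so $I\models\Pi\iff I\models N(\Pi)$ and $G(\Pi,I)=G(N(\Pi),I)$, and therefore $I\in\mathit{GSM}(\Pi)\iff I\in\mathit{GSM}(N(\Pi))$. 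A routine minimality argument shows $\mathit{GSM}(\Pi)$ contains no interpretation with $\bot$ (indeed none containing any atom outside $At(\Pi)$, since such an atom occurs neither in $\Pi$ nor in its reduct and so can be deleted from a candidate model). So the whole statement reduces to one sub-claim: \emph{no $I\in\mathit{GSM}(N(\Pi))$ contains $\bot$.}

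To prove the sub-claim, assume $\bot\in I\in\mathit{GSM}(N(\Pi))$ and put $J:=I\setminus\{\bot\}\subsetneq I$; I would derive a contradiction by showing $J\models G(N(\Pi),I)$. Take a reduct rule $G(r',I)$, with $r'\in N(\Pi)$ coming from $r\in\Pi$ and $I\models B(r')$. If $r$ contains some negated literal $\naf p$, then $I$ satisfies the corresponding aggregate $A$ with $\mathit{dom}(A)=\{p,\bot\}$, and since $\bot\in I$ this forces $p\in I$, so $G(A,I)=I\cap\{p,\bot\}=\{p,\bot\}$ enters the body of $G(r',I)$; as $\bot\notin J$, $J$ falsifies that body and hence satisfies $G(r',I)$ vacuously. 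If $r$ contains no negated literal, then $r'=r$ and $G(r',I)$ mentions no $\bot$, so $I$ and $J$ agree on all its atoms, and $I\models N(\Pi)$ gives $J\models G(r',I)$. This proves the sub-claim, hence $\mathit{GSM}(\Pi)=\mathit{GSM}(N(\Pi))$, completing the proof. I expect this sub-claim to be the only genuinely delicate point: $N$'s aggregate is engineered so that turning $\bot$ on flips its meaning from ``$p$ is false'' to ``$p$ is true,'' and one must verify that this flip always injects $\bot$ into the body of the corresponding reduced rule, so that deleting $\bot$ from a candidate model can never break satisfaction of the reduct. Everything else is bookkeeping on reducts; a minor point worth double-checking is the convention for whether atoms appearing only inside aggregate domains ``occur'' in a program, but since the $p$ of every $\naf p$ already occurs in $\Pi$ and $\bot$ is the sole new symbol, the choice of convention does not affect the argument.
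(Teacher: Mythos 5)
Your proposal is correct and follows essentially the same route as the paper: for $C(\Pi)$ (and for $N(\Pi)$ on $\bot$-free interpretations) the replacement aggregate is satisfied exactly when $\naf p$ is, and in any triggered rule both contribute $\emptyset$ to the G-reduct, so the reducts coincide literally and stability transfers. The only addition is that you spell out the minimality argument showing $\bot$ (and any atom outside $At(\Pi)$) cannot appear in a G-stable model of $\Pi$ or $N(\Pi)$, a fact the paper merely asserts, and your argument for it is sound.
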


Since $C(\Pi)$ and $N(\Pi)$ can be obtained in polynomial-time and only comprise convex and non-convex aggregates, respectively, the hardness results are obtained.

\begin{restatable}{theorem}{ThmHardOne}\label{thm:hard:1}
G-coherence testing is $\Sigma^P_2$-hard for both ASP($\vee$, C) and ASP($\vee$, N).
It is NP-hard for both ASP(C) and ASP(N).
\end{restatable}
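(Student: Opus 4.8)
The plan is to reduce from the two classical hardness results for aggregate-free programs with negation, transporting coherence through the rewritings $C(\cdot)$ and $N(\cdot)$ of Definition~\ref{def:simple-rewritings} via Lemma~\ref{lem:simple-rewritings}. Concretely, I would take as given that G-coherence testing is $\Sigma^P_2$-hard for ASP($\neg$, $\vee$) \cite{DBLP:journals/amai/EiterG95} and NP-hard for ASP($\neg$) \cite{DBLP:journals/csur/DantsinEGV01} (on aggregate-free programs G-stable models coincide with the standard stable models, so these bounds apply verbatim), and then show that $C$ and $N$ are polynomial-time reductions whose outputs lie in the target fragments.

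First I would discharge the syntactic side. Both $C$ and $N$ perform a purely local replacement of each negative literal $\naf p$ by a single aggregate atom and never modify rule heads, so they are computable in polynomial time and introduce no disjunction; in particular $C(\Pi)$ and $N(\Pi)$ are disjunction-free whenever $\Pi$ is. Next, the aggregate $A$ used by $C$ satisfies $A(I) = \T$ iff $p \notin I$, hence is antimonotone and therefore convex: if $A(I) = A(K) = \T$ and $I \subseteq J \subseteq K$, then $p \notin K \supseteq J$, so $A(J) = \T$; thus $C(\Pi)$ belongs to ASP($\vee$, C), and to ASP(C) when $\Pi$ has no disjunction. The aggregate $A$ used by $N$ satisfies $A(I) = \T$ iff $|\{p,\bot\} \cap I| \in \{0,2\}$; taking $I = \emptyset$, $J = \{p\}$, $K = \{p,\bot\}$ gives $A(I) = A(K) = \T$ but $A(J) = \F$, so $A$ is non-convex and $N(\Pi)$ belongs to ASP($\vee$, N) (resp.\ to ASP(N)). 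It is worth remarking here why the auxiliary atom $\bot$ occurs in $N$: with $\mathit{dom}(A) = \{p\}$ and the condition $|\{p\} \cap I| \neq 1$ the aggregate would again be antimonotone, hence convex, so padding the domain with an atom $\bot$ not occurring in $\Pi$ is what lets the count reach $2$ and breaks convexity, while Lemma~\ref{lem:simple-rewritings} guarantees this padding is semantically inert.

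Correctness is then immediate from Lemma~\ref{lem:simple-rewritings}: since $S \equiv_C S'$ implies $|S| = |S'|$, the relations $\mathit{GSM}(\Pi) \equiv_{At(\Pi)} \mathit{GSM}(C(\Pi)) \equiv_{At(\Pi)} \mathit{GSM}(N(\Pi))$ force $\mathit{GSM}(\Pi)$, $\mathit{GSM}(C(\Pi))$ and $\mathit{GSM}(N(\Pi))$ to be simultaneously empty or nonempty. Hence $\Pi$ is G-coherent iff $C(\Pi)$ is G-coherent iff $N(\Pi)$ is G-coherent, so $C$ and $N$ are correct reductions. Instantiating the source problem over ASP($\neg$, $\vee$) yields $\Sigma^P_2$-hardness of G-coherence testing for ASP($\vee$, C) and ASP($\vee$, N); instantiating it over ASP($\neg$), where no disjunction is added, yields NP-hardness for ASP(C) and ASP(N).

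I do not expect a genuine obstacle. The substantive content — that $C$ and $N$ preserve G-stable models in the context $At(\Pi)$ — is already packaged in Lemma~\ref{lem:simple-rewritings}, and what remains are the routine checks that the two introduced aggregates are respectively convex and non-convex and that the transformations are polynomial and disjunction-neutral. The only point requiring a moment's care is the role of the dummy atom $\bot$ in $N$, which is settled by the three-set witness above.
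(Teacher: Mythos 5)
Your proposal is correct and follows essentially the same route as the paper's proof: invoke the known $\Sigma^P_2$- and NP-hardness of coherence testing for aggregate-free programs with (and without) disjunction, transport coherence through $C(\cdot)$ and $N(\cdot)$ via Lemma~\ref{lem:simple-rewritings}, and check that the transformations are polynomial, disjunction-neutral, and produce only convex (resp.\ non-convex) aggregates. Your explicit verification of convexity/non-convexity and of the role of the padding atom $\bot$ simply spells out details the paper leaves implicit.
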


The only missing cases are now for programs with monotone aggregates.
If disjunction and negation are not allowed, G-coherence testing is P-hard because of the following reduction:
Let $\Pi$ be in ASP($-$), and $p$ be a propositional atom.
Checking whether $p$ is a cautious consequence of $\Pi$ is equivalent to test coherence of $\Pi \cup \{p \leftarrow A\}$, where $\mathit{dom}(A) = \{p\}$ and $A(I) = |\{p\} \cap I| \geq 0$, for all $I \subseteq \A$.
Since cautious reasoning is P-hard for ASP($-$) (i.e., checking if a propositional atom belongs to the unique model of a Datalog program), and program $\Pi \cup \{p \leftarrow A\}$ can be built using constant space, the complexity result is obtained.

\begin{restatable}{theorem}{ThmHardTwo}\label{thm:hard:2}
G-coherence testing is P-hard for ASP(M).
\end{restatable}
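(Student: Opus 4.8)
The plan is to give a logarithmic-space many-one reduction from G-cautious reasoning for ASP($-$) programs, which is well known to be P-hard (it is the problem of deciding whether a propositional atom belongs to the least model of a propositional Horn program; see \cite{DBLP:journals/csur/DantsinEGV01}). Given an instance consisting of a program $\Pi$ in ASP($-$) and a propositional atom $p$, let $A$ be the aggregate with $\mathit{dom}(A) = \{p\}$ and $A(I) = |\{p\} \cap I| \geq 0$ for all $I \subseteq \A$ --- so $A$ is identically \T\ and hence trivially monotone --- and put $\Pi' := \Pi \cup \{p \leftarrow A\}$. Then $\Pi'$ belongs to ASP(M) and is computable from $(\Pi,p)$ in logarithmic space (it is just $\Pi$ with one fixed-size rule appended). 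I will show that $\Pi'$ is G-coherent if and only if $p$ is a G-cautious consequence of $\Pi$, which establishes the theorem.

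Two preliminary observations fix the semantics of the two sides. Since $\Pi$ is in ASP($-$) $\subseteq$ ASP(M), for every interpretation $I$ we have $G(\Pi,I) = \{r \in \Pi \mid I \models B(r)\}$ with each rule left unchanged; writing $M$ for the least fixpoint of $T_\Pi$, a straightforward bottom-up argument shows that the least fixpoint of $T_{G(\Pi,M)}$ is again $M$, so Lemma~\ref{lem:tp-sem} gives $\mathit{GSM}(\Pi) = \{M\}$, and $p$ is a G-cautious consequence of $\Pi$ iff $p \in M$. On the other side, because $A$ is identically \T\ we have $T_{\Pi'}(I) = T_\Pi(I) \cup \{p\}$ for every $I$, so the least fixpoint of $T_{\Pi'}$ is $I_0 := M \cup \{p\}$; Lemma~\ref{lem:tp-sem}, applied now to $\Pi' \in$ ASP(M), says that $\Pi'$ is G-coherent precisely when $I_0$ equals the least fixpoint $J$ of $T_{G(\Pi',I_0)}$ (and then $I_0$ is the unique G-stable model of $\Pi'$). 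So everything reduces to comparing $I_0$ with $J$.

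The point of the construction is that in the G-reduct the rule $p \leftarrow A$ collapses to the useless rule $p \leftarrow p$: since $p \in I_0$, its body becomes $I_0 \cap \mathit{dom}(A) = \{p\}$, so $G(\Pi', I_0) = \{r \in \Pi \mid I_0 \models B(r)\} \cup \{p \leftarrow p\}$. If $p \in M$, then $I_0 = M$; the Horn part $\{r \in \Pi \mid M \models B(r)\}$ again has least fixpoint $M$ by the bottom-up argument above, and appending $p \leftarrow p$ changes nothing because $p \in M$, so $J = M = I_0$ and $\Pi'$ is G-coherent. If $p \notin M$, I claim $p \notin J$, which gives $J \neq I_0$ (as $p \in I_0$) and hence, by Lemma~\ref{lem:tp-sem}, that $\Pi'$ is G-incoherent. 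To see the claim, consider any rule of $G(\Pi', I_0)$ whose head is $p$: it is either $p \leftarrow p$, or some $r \in \Pi$ with $H(r) = \{p\}$ and $I_0 \models B(r)$; in the latter case, if $p \notin B(r)$ then $B(r) \subseteq I_0 \setminus \{p\} = M$, whence $p \in T_\Pi(M) = M$, contradicting $p \notin M$ --- so $p \in B(r)$. Thus every rule of $G(\Pi', I_0)$ with head $p$ contains $p$ in its body, and an immediate induction on the bottom-up iteration $J_0 = \emptyset$, $J_{i+1} = T_{G(\Pi',I_0)}(J_i)$ shows $p \notin J_i$ for all $i$, i.e.\ $p \notin J$. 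Combining the two cases, $\Pi'$ is G-coherent iff $p \in M$ iff $p$ is a G-cautious consequence of $\Pi$.

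The one step that needs care is the case $p \notin M$: one must verify that the only rules in the reduct $G(\Pi', I_0)$ that could conceivably (re)derive $p$ are $p \leftarrow p$ itself and original rules of $\Pi$ that carry $p$ in their body, because an original rule with head $p$ whose body avoids $p$ and is satisfied by $I_0$ would already fire against the fixpoint $M$. Everything else is bookkeeping, together with the two appeals to Lemma~\ref{lem:tp-sem} --- the first to pin down the semantics of ASP($-$), the second to collapse G-coherence of $\Pi'$ into a single least-fixpoint comparison --- and the classical P-hardness of Horn cautious reasoning under logarithmic-space reductions.
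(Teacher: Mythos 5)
Your proposal is correct and matches the paper's own reduction: it augments $\Pi$ with the rule $p \leftarrow A$ for the trivially true monotone aggregate $A$ with $\mathit{dom}(A)=\{p\}$, and shows G-coherence of the resulting ASP(M) program is equivalent to $p$ being a cautious consequence of $\Pi$, using Lemma~\ref{lem:tp-sem}. The only (immaterial) difference is in the incoherent case: you argue directly that $p$ cannot enter the least fixpoint of the reduct because every reduct rule with head $p$ carries $p$ in its body, whereas the paper notes that the reduct's least fixpoint is contained in the least model of $\Pi$ and hence is a proper subset of any model of $\Pi'$.
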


For the disjunctive case, instead, the following transformation is used.

\begin{definition}\label{def:m-rewriting}
Let $\Pi$ be in ASP($\neg$, $\vee$).
Let $M(\Pi)$ be the program obtained from $\Pi$ by replacing every occurrence of a negative literal $\naf p$ with $p^F$, where $p^F$ is a fresh propositional atom associated with $p$, and by adding rule
$p \vee p^F \leftarrow A$,
where $\mathit{dom}(A) = \{p\}$ and $A(I) = |\{p\} \cap I| \geq 0$, for all $I \subseteq \A$.
\end{definition}

\begin{restatable}{lemma}{LemMrewriting}\label{lem:m-rewriting}
Let $\Pi$ be in ASP($\neg$, $\vee$).
The following relation holds:
$\mathit{GSM}(\Pi) \equiv_{At(\Pi)} \mathit{GSM}(M(\Pi))$.
\end{restatable}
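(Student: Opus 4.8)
\emph{Proof plan.} The plan is to set up a bijection between $\mathit{GSM}(\Pi)$ and $\mathit{GSM}(M(\Pi))$ that preserves the truth value of every atom of $At(\Pi)$; this delivers at once $|\mathit{GSM}(\Pi)| = |\mathit{GSM}(M(\Pi))|$ and the equality of the two families $\{I \cap At(\Pi) \mid I \in \mathit{GSM}(\Pi)\}$ and $\{I \cap At(\Pi) \mid I \in \mathit{GSM}(M(\Pi))\}$, i.e.\ exactly $\mathit{GSM}(\Pi) \equiv_{At(\Pi)} \mathit{GSM}(M(\Pi))$. Let $N$ be the set of atoms occurring negated in $\Pi$; recall from Definition~\ref{def:m-rewriting} that $M(\Pi)$ replaces each $\naf p$ with $p \in N$ in bodies by the fresh atom $p^F$ and adds the rule $\rho_p : p \vee p^F \leftarrow A$, where $A$ is the always-true (hence monotone) aggregate on $\{p\}$. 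For $I \subseteq At(\Pi)$ I set $I^\star := I \cup \{p^F \mid p \in N,\ p \notin I\}$. I will prove that $I \mapsto I^\star$ sends $\mathit{GSM}(\Pi)$ into $\mathit{GSM}(M(\Pi))$, that $I' \mapsto I' \cap At(\Pi)$ sends $\mathit{GSM}(M(\Pi))$ into $\mathit{GSM}(\Pi)$, and that the two maps are mutually inverse.

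Three simple facts will be used repeatedly: (a) since $p^F$ is fresh, in $M(\Pi)$ it occurs only inside bodies in place of $\naf p$ and in the head of $\rho_p$, so no rule inherited from $\Pi$ has $p^F$ in its head; (b) every model of a program satisfies its own $G$-reduct, since the reduced body of each active rule consists of atoms true in the interpretation while its head is unchanged; (c) if $r \in \Pi$ and $r'$ is the corresponding rule of $M(\Pi)$, then $I^\star \models B(r')$ iff $I \models B(r)$ (the positive body atoms lie in $At(\Pi)$, and $p^F \in I^\star$ iff $p \notin I$ iff $I \models \naf p$), the heads coincide, and $G(r',I^\star)$ is $G(r,I)$ with the atoms $\{p^F \mid \naf p \in B(r)\}$ — all true in $I^\star$ — adjoined to the body. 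With these in hand I would handle the forward direction first: given $I \in \mathit{GSM}(\Pi)$, fact (c) gives $I^\star \models r'$ for every inherited rule, and each $\rho_p$ holds since $p \in I^\star$ or $p^F \in I^\star$ by construction, so $I^\star \models M(\Pi)$. If some $J^\star \subsetneq I^\star$ satisfied $G(M(\Pi),I^\star)$, then for $p \in N$ with $p \notin I$ the reduct of $\rho_p$ is the fact $p \vee p^F \leftarrow$ (empty body, as $I^\star \cap \{p\} = \emptyset$), forcing $p^F \in J^\star$ because $p \notin I^\star$; hence $J^\star$ contains all fresh atoms of $I^\star$ and $J := J^\star \cap At(\Pi)$ satisfies $J \subsetneq I$. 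By fact (c) each $r \in \Pi$ with $I \models B(r)$ contributes $G(r',I^\star)$ to $G(M(\Pi),I^\star)$, with its extra $p^F$-atoms in $J^\star$ and its head in $At(\Pi)$, so $J^\star$ satisfying it gives $J \models G(r,I)$; thus $J \models G(\Pi,I)$ with $J \subsetneq I$, contradicting G-stability of $I$, and therefore $I^\star \in \mathit{GSM}(M(\Pi))$.

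Then I would handle the backward direction. Given $I' \in \mathit{GSM}(M(\Pi))$, let $I := I' \cap At(\Pi)$; I claim $I' = I^\star$, i.e.\ for each $p \in N$ exactly one of $p,p^F$ belongs to $I'$. ``At least one'' is immediate from $I' \models \rho_p$. For ``at most one'', if $p,p^F \in I'$ then $I' \setminus \{p^F\}$ still satisfies $G(M(\Pi),I')$: by fact (a), $p^F$ occurs in the reduct only inside bodies (where deleting it can only falsify a body) and in the head of the reduct of $\rho_p$, which is $p \vee p^F \leftarrow p$ (as $p \in I'$) and remains satisfied via $p$ — contradicting G-stability. With $I' = I^\star$, fact (c) yields $I \models \Pi$; and if some $J \subsetneq I$ satisfied $G(\Pi,I)$, it would lift to $J^\star := J \cup (I' \setminus At(\Pi)) \subsetneq I'$ satisfying $G(M(\Pi),I')$ (inherited rules as above; each reduct of $\rho_p$ satisfied via $p$ when $p \in I$, via $p^F \in J^\star$ when $p \notin I$), contradicting G-stability of $I'$, so $I \in \mathit{GSM}(\Pi)$. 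Since $I^\star \cap At(\Pi) = I$ and $I' = (I' \cap At(\Pi))^\star$, the two maps are mutually inverse, which completes the argument.

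The hard part will be the ``at most one of $p,p^F$'' claim: arguing that removing the redundant $p^F$ from a candidate G-stable model of $M(\Pi)$ damages no rule of the $G$-reduct. This hinges on the fresh-atom bookkeeping in fact (a) — that $p^F$ occupies exactly one head, that of the already-satisfied reduct $p \vee p^F \leftarrow p$ of $\rho_p$ — together with fact (b); the remaining steps are just tracking which atoms the reduced disjunctive facts $p \vee p^F \leftarrow$ push into candidate counter-models.
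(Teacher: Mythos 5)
Your proof is correct and follows essentially the same route as the paper: the same correspondence $I \mapsto I \cup \{p^F \mid p \notin I\}$ (the paper assumes w.l.o.g.\ that every atom occurs negated, where you restrict to $N$), the same use of the reduced facts $p \vee p^F \leftarrow$ to force the fresh atoms into any counter-model, and the same observation that $p^F$ occurs in no other head to get $|I' \cap \{p,p^F\}| = 1$ in the backward direction. Your write-up is merely more explicit about lifting and projecting counter-models than the paper's terser argument.
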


Since $M(\Pi)$ is polynomial-time constructible and only comprises monotone aggregates, $\Sigma^P_2$-hardness follows.

\begin{restatable}{theorem}{ThmHardThree}\label{thm:hard:3}
G-coherence testing is $\Sigma^P_2$-hard for ASP($\vee$, M).
\end{restatable}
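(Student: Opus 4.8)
The plan is to obtain a polynomial-time many-one reduction to G-coherence testing for ASP($\vee$, M) from a problem that the excerpt already records as $\Sigma^P_2$-hard, namely G-coherence testing for aggregate-free disjunctive programs with negation, i.e. for ASP($\neg$, $\vee$) --- this is the $\Sigma^P_2$ entry in the top row of Table~\ref{tab:complexity}, inherited from \cite{DBLP:journals/amai/EiterG95}. The reduction is simply the transformation $\Pi \mapsto M(\Pi)$ of Definition~\ref{def:m-rewriting}.

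Three things need to be checked, and only the last one carries weight. First, $M(\Pi)$ is computable in polynomial time (in fact in logarithmic space): one replaces each negatively occurring atom $p$ by a fresh atom $p^F$ and appends the single rule $p \vee p^F \leftarrow A$. Second, $M(\Pi)$ lies in ASP($\vee$, M): no negative literal survives, and the only aggregates used are the $A$'s with $\mathit{dom}(A) = \{p\}$ and $A(I) = |\{p\} \cap I| \geq 0$, each of which holds in every interpretation and is therefore trivially monotone. Third, Lemma~\ref{lem:m-rewriting} gives $\mathit{GSM}(\Pi) \equiv_{At(\Pi)} \mathit{GSM}(M(\Pi))$; since $\equiv_{At(\Pi)}$ includes equality of cardinalities, $\mathit{GSM}(\Pi) = \emptyset$ iff $\mathit{GSM}(M(\Pi)) = \emptyset$, that is, $\Pi$ is G-coherent iff $M(\Pi)$ is. One should also remark that on aggregate-free programs the G-reduct coincides with the Gelfond--Lifschitz reduct (it deletes negative literals from the bodies of the rules whose body $I$ satisfies and keeps positive literals verbatim), so the G-stable models of such a $\Pi$ are exactly its ordinary stable models and the cited hardness genuinely applies; this is a one-line observation. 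Putting the three points together, $\Pi \mapsto M(\Pi)$ is the required reduction and the theorem follows.

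The real obstacle does not lie in this theorem at all, but in Lemma~\ref{lem:m-rewriting}, which I am allowed to assume. If I had to prove that lemma too, the crux would be verifying that the choice rule $p \vee p^F \leftarrow A$ emulates negation faithfully under the minimality test of the G-reduct: for a candidate interpretation $I$ the G-reduct rewrites $A$ as $I \cap \{p\}$, so the added rule becomes $p \vee p^F \leftarrow p$ (vacuous) when $p \in I$ and $p \vee p^F \leftarrow$ when $p \notin I$, which forces exactly $p^F$ into every model $J \subseteq I$ of the reduct precisely when $p$ is false in $I$ --- mirroring the effect of deleting $\naf p$ and preserving both modelhood and the non-existence of a strictly smaller model of the reduct. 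Keeping the accounting of what is forced versus what is free correct across the translation, so that the cardinalities demanded by $\equiv_{At(\Pi)}$ match and not merely the projections, would be the part demanding care.
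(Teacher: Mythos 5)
Your proof is correct and follows the same route as the paper: it reduces the known $\Sigma^P_2$-hard problem of G-coherence testing for ASP($\neg$, $\vee$) to ASP($\vee$, M) via the transformation $M(\Pi)$ of Definition~\ref{def:m-rewriting}, invoking Lemma~\ref{lem:m-rewriting} for faithfulness and noting that $M(\Pi)$ is polynomial-time constructible, negation-free, and uses only monotone aggregates. Your added remark that the G-reduct coincides with the Gelfond--Lifschitz reduct on aggregate-free programs is a harmless (and correct) explicit justification of a point the paper leaves implicit.
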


\subsection{Complexity of Cautious Reasoning}

As in the previous section, the first row of Table~\ref{tab:complexity} reports well-known results concerning complexity of cautious reasoning for programs without aggregates.
Moreover, membership in $\Pi^P_2$ is proved in \cite{DBLP:journals/tplp/GelfondZ14} for the general case.
For a program $\Pi$ in ASP(M), membership in P is obtained by noting that the unique G-stable model candidate of $\Pi$ can be computed in polynomial-time, as shown in the previous section.

\begin{restatable}{theorem}{ThmGcaP}\label{thm:gca-p}
G-cautious reasoning is in P for ASP(M).
\end{restatable}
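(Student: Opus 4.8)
The plan is to reduce G-cautious reasoning on an ASP(M) program to the computation of two least fixpoints, exploiting the characterization of $\mathit{GSM}(\Pi)$ already established in Lemma~\ref{lem:tp-sem}.

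First I would compute $I := \mathrm{lfp}(T_\Pi)$. By Lemma~\ref{lem:tp-sem} this is well defined and computable in polynomial time: for $\Pi$ in ASP(M) the operator $T_\Pi$ is monotone, a single application costs linear time, and at most $|At(\Pi)|$ iterations are needed to reach the fixpoint. Then I would build the reduct $G(\Pi,I)$ — a program in ASP($-$), whose aggregate literals have been replaced by the sets $I \cap \mathit{dom}(A)$, which under $\models$ behave as plain conjunctions of atoms — and compute $J := \mathrm{lfp}(T_{G(\Pi,I)})$, again in polynomial time.

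Now I would split into the two cases of Lemma~\ref{lem:tp-sem}. If $I \neq J$, then $\Pi$ is G-incoherent, hence $\mathit{GSM}(\Pi) = \emptyset$ and, by the usual vacuous-truth convention, every propositional atom — in particular the queried atom $p$ — is a G-cautious consequence of $\Pi$, so the algorithm answers ``yes''. If $I = J$, then $\mathit{GSM}(\Pi) = \{I\}$, so $p$ is a G-cautious consequence of $\Pi$ if and only if $p \in I$, which is checked in constant time once $I$ is available. Since every step runs in polynomial time, the procedure witnesses membership of G-cautious reasoning in P for ASP(M).

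There is no genuinely hard step here: all the combinatorial content resides in Lemma~\ref{lem:tp-sem}, which already shows that an ASP(M) program has at most one G-stable model and that, when one exists, it equals $\mathrm{lfp}(T_\Pi)$. The only subtlety worth spelling out is the bookkeeping for the G-incoherent case, namely that an empty set of G-stable models makes every atom a cautious consequence, so that this case must be answered ``yes'' rather than ``no''.
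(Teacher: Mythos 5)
Your proof is correct and follows essentially the same route as the paper: the paper's proof also first decides G-coherence via the two-fixpoint test of Theorem~\ref{thm:gco-def-p} and then, in the coherent case, computes $I = \mathrm{lfp}(T_\Pi)$ (Lemmas~\ref{lem:tp-poly} and~\ref{lem:tp-sem}) and accepts iff $p \in I$. The only divergence is the incoherent case, where you accept by vacuous truth while the paper's proof rejects; your handling is the one consistent with the paper's definition of cautious consequence (and with its reduction of coherence testing to cautious reasoning used before Theorem~\ref{thm:gco-hard}), and under either convention the polynomial-time bound, hence membership in P, is unaffected.
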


For a disjunction-free program $\Pi$ and a propositional atom $p$, the complementary problem can be solved by guessing an interpretation $I$ such that $p \notin I$, and checking that $I$ is a G-stable model of $\Pi$.
It is a polytime check because $G(\Pi,I)$ is ASP($-$).

\begin{restatable}{theorem}{ThmGcaConp}\label{thm:gca-conp}
G-cautious reasoning is in co-NP for programs in ASP($\neg$, M, C, N).
\end{restatable}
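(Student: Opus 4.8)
The plan is to exhibit a nondeterministic polynomial-time algorithm that decides the complement of G-cautious reasoning: given a disjunction-free program $\Pi$ in ASP($\neg$, M, C, N) and an atom $p$, accept iff $p$ is \emph{not} a G-cautious consequence of $\Pi$, i.e., iff there exists a G-stable model $I$ of $\Pi$ with $p \notin I$. The algorithm first guesses an interpretation $I \subseteq At(\Pi)$ with $p \notin I$, and then verifies in deterministic polynomial time that $I \in \mathit{GSM}(\Pi)$. Since a certificate (the interpretation $I$, of size $|At(\Pi)|$) is guessed and then checked deterministically in polynomial time, this places the complementary problem in NP, hence G-cautious reasoning in co-NP.

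The core of the argument is that the check ``$I \in \mathit{GSM}(\Pi)$'' is polynomial-time for disjunction-free $\Pi$. By definition, this amounts to verifying (i) $I \models \Pi$ and (ii) there is no $J \subset I$ with $J \models G(\Pi, I)$. Condition (i) is a linear-time model check, using the fact (stated in the background and in Section~\ref{sec:complexity}) that aggregate atoms are polynomial-time computable functions, so $I \models B(r)$ and $H(r) \cap I \neq \emptyset$ can be tested rule by rule. For condition (ii), the key observation is that $G(\Pi, I)$ lies in ASP($-$): by construction of the G-reduct, every aggregate atom $A$ in the body of a rule $r$ with $I \models B(r)$ is replaced by the fixed set $I \cap \mathit{dom}(A)$ of ordinary atoms, every negative literal is replaced by $\emptyset$, and since $\Pi$ is disjunction-free each surviving rule has a single head atom. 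Thus $G(\Pi, I)$ is a negation-free, disjunction-free, aggregate-free (in fact Horn) program. For such a program the immediate consequence operator $T_{G(\Pi,I)}$ is monotone and its least fixpoint $J^\ast$ is computable in polynomial time by Lemma~\ref{lem:tp-poly}; moreover $J^\ast$ is the unique minimal model of $G(\Pi,I)$. Hence condition (ii) holds exactly when $J^\ast = I$: if $J^\ast \subsetneq I$ then $J^\ast$ is a proper sub-model witnessing that $I$ is not minimal, while if $J^\ast = I$ then no proper subset of $I$ models $G(\Pi,I)$. Computing $J^\ast$ and comparing it with $I$ is polynomial, so the whole verification step is polynomial.

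Putting these pieces together yields the co-NP membership, and the theorem follows for all of ASP($\neg$, M, C, N) uniformly, since nothing in the check distinguishes monotone, convex, or non-convex aggregates — only the polynomial-time computability of $A$ is used. The main obstacle, and the point that needs to be stated carefully, is the reduction of minimality testing for $G(\Pi,I)$ to a single least-fixpoint computation: one must note that $G(\Pi,I)$ is genuinely Horn (here disjunction-freeness of $\Pi$ is essential, as otherwise minimal-model checking becomes co-NP-hard on its own and the argument collapses), and invoke Lemma~\ref{lem:tp-poly} to get both polynomial computability and the characterization of the least fixpoint as the unique minimal model. Everything else is routine: the guess is polynomially bounded, and the model check for $I \models \Pi$ is immediate from polynomial-time computability of the aggregates.
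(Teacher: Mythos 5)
Your proposal is correct and follows essentially the same route as the paper's proof: guess an interpretation $I$ with $p \notin I$, build the reduct $G(\Pi,I)$, observe that it lies in ASP($-$), and use Lemma~\ref{lem:tp-poly} to compute its least fixpoint and compare it with $I$. The extra explicit check $I \models \Pi$ is harmless (and in fact subsumed by the equality test with the least fixpoint), so the two arguments coincide in substance.
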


As for the hardness, first of all observe that P-hardness for the simplest case provides P-hardness for any case.
Moreover, coherence testing can be reduced to (the complement of) cautious reasoning in general.
In more detail, a program $\Pi$ is G-coherent if and only if $\bot$ is not a G-cautious consequence of $\Pi$, where $\bot \in \A \setminus At(\Pi)$ is an atom not occurring in $\Pi$.
In fact, if $\Pi$ is G-coherent then its G-stable models cannot contain $\bot$, and therefore $\bot$ is not a G-cautious consequence of $\Pi$.
Otherwise, $\bot$ is a G-cautious consequence because $\Pi$ has no stable models.
Since G-coherence of $\Pi$ can be equivalently checked on $M(\Pi)$, $C(\Pi)$, or $N(\Pi)$, all other hardness results are obtained from Theorems~\ref{thm:hard:1}--\ref{thm:hard:3}.

\begin{restatable}{theorem}{ThmGcoHard}\label{thm:gco-hard}
G-cautious consequence is $\Pi^P_2$-hard for \mbox{ASP($\vee$, M)}, ASP($\vee$, C) and ASP($\vee$, N).
It is co-NP-hard for ASP(C) and ASP(N).
\end{restatable}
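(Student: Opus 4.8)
The plan is to reduce the complement of G-coherence testing to G-cautious reasoning, as anticipated in the paragraph just before the statement. Fix a fresh atom $\bot \in \A \setminus At(\Pi)$ and consider the map $\Pi \mapsto (\Pi,\bot)$. First I would establish correctness of this map: (i) if $\Pi$ is G-incoherent, then $\mathit{GSM}(\Pi) = \emptyset$, so $\bot$ belongs vacuously to every G-stable model and is a G-cautious consequence; (ii) if $\Pi$ is G-coherent, then I claim no $I \in \mathit{GSM}(\Pi)$ contains $\bot$, hence $\bot$ is not a G-cautious consequence. For (ii), suppose $I \in \mathit{GSM}(\Pi)$ with $\bot \in I$. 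Since $\bot$ occurs nowhere in $\Pi$ --- in particular in no rule head, no body literal, and no aggregate domain --- it occurs nowhere in $G(\Pi,I)$; as $I \models G(\Pi,I)$ and $I$ and $I \setminus \{\bot\}$ agree on all atoms of $G(\Pi,I)$, we get $I \setminus \{\bot\} \models G(\Pi,I)$, which contradicts the minimality clause of the definition of G-stable model because $I \setminus \{\bot\} \subsetneq I$. Thus $\Pi$ is G-incoherent iff $\bot$ is a G-cautious consequence of $\Pi$, and the map is computable in constant space and leaves $\Pi$ in whatever language fragment it belongs to.

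Next I would plug the known hardness results for G-coherence testing into this reduction. Theorem~\ref{thm:hard:3} gives $\Sigma^P_2$-hardness of G-coherence testing for ASP($\vee$, M), so G-incoherence testing is $\Pi^P_2$-hard for that class, and composing with the reduction above yields $\Pi^P_2$-hardness of G-cautious reasoning for ASP($\vee$, M). The same argument applied to the $\Sigma^P_2$-hardness part of Theorem~\ref{thm:hard:1} gives $\Pi^P_2$-hardness for ASP($\vee$, C) and ASP($\vee$, N), and applied to the NP-hardness part of Theorem~\ref{thm:hard:1} gives co-NP-hardness for ASP(C) and ASP(N). If one prefers to keep the reductions rooted in aggregate-free programs with negation, the same conclusions follow by first pushing $\Pi$ into the desired aggregate class via Lemmas~\ref{lem:m-rewriting} and \ref{lem:simple-rewritings}, which is precisely the remark made in the preceding paragraph of the text.

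There is no heavy computation here; the one place that calls for care is the claim in step (ii) that a fresh atom can never enter a G-stable model. The subtlety is purely definitional --- one must read ``does not occur in $\Pi$'' so that $\bot$ is excluded also from the domains of the aggregate atoms appearing in $\Pi$, ensuring $\bot$ is absent from the G-reduct $G(\Pi,I)$ --- after which the minimality half of the G-stable model definition closes the argument. Accordingly, I expect the final write-up to be short, the bulk of it being the routine bookkeeping that matches each target language fragment to the appropriate item of Theorems~\ref{thm:hard:1}--\ref{thm:hard:3}.
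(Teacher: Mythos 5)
Your argument is correct, but it follows a genuinely different route from the paper's own proof: you have worked out the sketch given in the paragraph preceding the theorem, namely a reduction of the \emph{complement of G-coherence testing} to G-cautious reasoning via a fresh atom $\bot$, composed with Theorems~\ref{thm:hard:1} and~\ref{thm:hard:3}; your fresh-atom lemma (if $\bot\in I$ and $\bot$ occurs nowhere in $\Pi$, not even in aggregate domains, then $I\setminus\{\bot\}$ still satisfies $G(\Pi,I)$, contradicting minimality) is exactly what is needed, and your reading of ``not occurring in $\Pi$'' is the right one. The paper's appendix proof instead reduces from G-cautious reasoning itself: it starts from the known $\Pi^P_2$-hardness (resp.\ co-NP-hardness) of cautious reasoning for aggregate-free programs in which negation occurs only in a single rule $w \leftarrow \naf w$ \cite{DBLP:journals/amai/EiterG95}, applies Lemmas~\ref{lem:simple-rewritings}--\ref{lem:m-rewriting} to such a program, and uses $\mathit{GSM}(\Pi) \equiv_{At(\Pi)} \mathit{GSM}(M(\Pi)) \equiv_{At(\Pi)} \mathit{GSM}(C(\Pi)) \equiv_{At(\Pi)} \mathit{GSM}(N(\Pi))$ (the cardinality condition in $\equiv_{At(\Pi)}$ takes care of the incoherent case) to conclude that an atom $p \in At(\Pi)$ is a G-cautious consequence of $\Pi$ iff it is a G-cautious consequence of each rewritten program, so hardness transfers with no auxiliary atom. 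Your version buys modularity: it treats the coherence-hardness theorems as black boxes and needs only the easy fresh-atom observation plus the standard fact that the complement of a $\Sigma^P_2$-hard (resp.\ NP-hard) problem is $\Pi^P_2$-hard (resp.\ co-NP-hard); the paper's version avoids the fresh-atom detour but relies on the stronger literature fact that cautious reasoning is already hard on that syntactically restricted class. Both are sound, and since the theorems you invoke are themselves proved from the same rewriting lemmas, the two proofs ultimately rest on the same constructions.
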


\section{Compilation}\label{sec:compilation}

G-stable models of a logic program can be computed by compiling into F-stable model semantics, for which efficient implementation are available.
(Translations into other frameworks, for example Ferraris's propositional theories \cite{DBLP:conf/lpnmr/Ferraris05a}, are also possible, but not the focus of this paper.)
Two different rewritings are presented in this section.
The first rewriting is more compact, in the sense that it introduces fewer auxiliary atoms.
The second rewriting instead requires more auxiliary atoms, but has the advantage that the output program only comprises stratified aggregates (essentially, in these programs no recursive definition involves an aggregate; see \cite{DBLP:journals/ai/FaberPL11} for a formal definition).

\begin{definition}[Rewriting 1]\label{def:rew}
Let $\Pi$ be a program.
Let $\mathit{rew}(\Pi)$ be the program obtained from $\Pi$ by performing the following operations:
\begin{enumerate}
\item For each $p \in At(\Pi)$, a fresh propositional atom $p'$ and the following rules are introduced:
$p' \leftarrow \naf p$, and $p' \leftarrow p$.

\item
For each aggregate $A$ occurring in a rule $r$ of $\Pi$ and such that $\mathit{dom}(A) = \{p_1,\ldots,p_n\}$ (for some $n \geq 0$), literals $p_1',\ldots,p_n'$ are added to the body of $r$.
\end{enumerate}
\end{definition}

\begin{example}\label{ex:rew}
Consider again program $\Pi_1$ from Example~\ref{ex:syntax}, whose G-stable models are $\emptyset$ and $\{a,c\}$. 
Program $\mathit{rew}(\Pi_1)$ is the following:
$\{a \leftarrow \naf\naf a;$
$b \vee c \leftarrow A_1, a', b'\} \cup \{a' \leftarrow \naf a;$
$a' \leftarrow a;$
$b' \leftarrow \naf b;$
$b' \leftarrow b;$
$c' \leftarrow \naf c$;
$c' \leftarrow c\}$.
Its F-stable models are the following:
$\emptyset \cup X$ and $\{a,c\} \cup X$, where $X = \{a',b',c'\}$.
In fact, $a',b',c'$ are necessarily true because of the rules introduced in item 1 of Definition~\ref{def:rew}.
Moreover, note that if $a$ is false in some model $I$ then $a'$ is necessarily true in any model of the reduct $F(\Pi_1,I)$.
On the other hand, if $a$ is true in $I$ then $a'$ can be possibly assumed false in a model of $F(\Pi_1,I)$.
Similarly for $b$ and $b'$, and for $c$ and $c'$.
\hfill $\blacksquare$
\end{example}

Intuitively, as also observed in the above example, all auxiliary propositional atoms are true in any model $I$ of $\mathit{rew}(\Pi)$ because of the rules introduced in item 1 of Definition~\ref{def:rew}.
Moreover, if $J$ is a minimal model of $F(\mathit{rew}(\Pi),I)$ then the following properties are satisfied:
(i) if $p \notin I$, then $p' \in J$;
(ii) if $p \in I$, then $p' \in J$ if and only if $p \in J$.
Correctness of the first compilation is thus established.

\begin{restatable}{theorem}{ThmEqRew}\label{thm:eq-rew}
Let $\Pi$ be a program.
The following relation holds:
$\mathit{GSM}(\Pi) \equiv_{At(\Pi)} \mathit{FSM}(\mathit{rew}(\Pi))$.
\end{restatable}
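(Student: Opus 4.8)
The plan is to build a bijection $\phi\colon \mathit{GSM}(\Pi)\to \mathit{FSM}(\mathit{rew}(\Pi))$ that leaves the portion of an interpretation over $At(\Pi)$ untouched. Write $\mathit{Aux}=\{p'\mid p\in At(\Pi)\}$ for the fresh atoms introduced by the rewriting and put $\phi(I)=I\cup \mathit{Aux}$. Since $\mathit{Aux}$ is disjoint from $At(\Pi)$, $\phi$ is injective and $\phi(I)\cap At(\Pi)=I\cap At(\Pi)$; hence, as soon as $\phi$ is shown to be a well-defined surjection onto $\mathit{FSM}(\mathit{rew}(\Pi))$, we get $|\mathit{GSM}(\Pi)|=|\mathit{FSM}(\mathit{rew}(\Pi))|$ and $\{I\cap At(\Pi)\mid I\in \mathit{GSM}(\Pi)\}=\{M\cap At(\Pi)\mid M\in \mathit{FSM}(\mathit{rew}(\Pi))\}$, which is exactly $\mathit{GSM}(\Pi)\equiv_{At(\Pi)}\mathit{FSM}(\mathit{rew}(\Pi))$. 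Two harmless normalizations keep the bookkeeping light: every aggregate $A$ occurring in $\Pi$ has $\mathit{dom}(A)\subseteq At(\Pi)$ (otherwise item~2 of Definition~\ref{def:rew} would mention an undefined atom), so all head atoms and aggregate domains live inside $At(\Pi)$; and since atoms not occurring in a program can be removed from any $\subseteq$-minimal model of a definite reduct, both $\mathit{GSM}(\Pi)\subseteq 2^{At(\Pi)}$ and $\mathit{FSM}(\mathit{rew}(\Pi))\subseteq 2^{At(\Pi)\cup\mathit{Aux}}$, so no \emph{junk} atoms have to be tracked.

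First I would settle classical models: $M\models \mathit{rew}(\Pi)$ if and only if $\mathit{Aux}\subseteq M$ and $M\cap At(\Pi)\models\Pi$. The rules $p'\leftarrow p$ and $p'\leftarrow\,\naf p$ of item~1 force $\mathit{Aux}\subseteq M$ in every model, since one of the two bodies always holds; and once $\mathit{Aux}\subseteq M$, the literals $p_1',\dots,p_n'$ appended to a body in item~2 are all true, so an augmented rule is satisfied by $M$ exactly when the original rule is satisfied by $M\cap At(\Pi)$ — here one uses that $M$ and $M\cap At(\Pi)$ agree on all positive, negative and aggregate literals of $\Pi$, the last because $\mathit{dom}(A)\subseteq At(\Pi)$. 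In particular, for $I\subseteq At(\Pi)$ we have $I\models\Pi$ iff $\phi(I)\models\mathit{rew}(\Pi)$.

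The core of the argument is the minimality condition: for $I\subseteq At(\Pi)$ with $I\models\Pi$ and $M=\phi(I)$, I must show that some $J\subsetneq M$ satisfies $F(\mathit{rew}(\Pi),M)$ if and only if some $J'\subsetneq I$ satisfies $G(\Pi,I)$. The first step is to write $F(\mathit{rew}(\Pi),M)$ out: the item-1 rules reduce to the facts $p'\leftarrow\,$ for $p\in At(\Pi)\setminus I$ and the rules $p'\leftarrow p$ for $p\in I$, while each $r\in\Pi$ with $I\models B(r)$ reduces to $H(r)\leftarrow \mathit{pos}(B(r)),\,\mathit{agg}(B(r)),\,\{q'\mid q\in \mathit{dom}(A),\ A\in B(r)\}$, where $\mathit{pos}$ and $\mathit{agg}$ collect the positive and the aggregate literals of $B(r)$. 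Comparing with $G(r,I)$, whose body is $\mathit{pos}(B(r))$ together with the atoms of $I\cap \mathit{dom}(A)$ for each aggregate $A\in B(r)$, the key observation is that whenever $J\subseteq M$ satisfies the body of the reduced augmented rule, the literals $q'$ with $q\in \mathit{dom}(A)$ force $I\cap \mathit{dom}(A)\subseteq J$ (via the facts and the rules $p'\leftarrow p$), and since also $J\cap \mathit{dom}(A)\subseteq M\cap \mathit{dom}(A)=I\cap \mathit{dom}(A)$, we get $J\cap \mathit{dom}(A)=I\cap \mathit{dom}(A)$; hence $J\models A$ holds automatically and the aggregate atom in the reduct behaves precisely like the conjunction of $I\cap \mathit{dom}(A)$ appearing in $G(r,I)$. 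With this in hand, from a countermodel $J\subsetneq M$ of $F(\mathit{rew}(\Pi),M)$ one checks that $J':=J\cap At(\Pi)\subsetneq I$ satisfies $G(\Pi,I)$; conversely, from $J'\subsetneq I$ satisfying $G(\Pi,I)$ one verifies that $J:=J'\cup\{p'\mid p\in At(\Pi)\setminus I \text{ or } p\in J'\}$ is a proper subset of $M$ satisfying $F(\mathit{rew}(\Pi),M)$. Combining this with the classical-model claim yields $I\in \mathit{GSM}(\Pi)\iff\phi(I)\in\mathit{FSM}(\mathit{rew}(\Pi))$, establishing that $\phi$ is the desired bijection.

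I expect the main obstacle to be making the reduct-body comparison watertight, and specifically the step ``$q'\in J$ for all $q\in \mathit{dom}(A)$ forces $J\cap \mathit{dom}(A)=I\cap \mathit{dom}(A)$'': this is exactly where the added atoms $p'$, the direction of the item-1 rules, and the fact that $J$ lies below $M$ all have to work together, and it is what lets an otherwise non-convex aggregate atom in the $F$-reduct mimic the conjunction used in the $G$-reduct. The remaining ingredients — the classical-model equivalence, discarding junk atoms, and passing between $J$ and $J'$ in both directions — are routine once that observation is available.
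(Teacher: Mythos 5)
Your overall plan coincides with the paper's own proof: the same map $I\mapsto I\cup\{p'\mid p\in At(\Pi)\}$, the same classical-model equivalence, and the transfer of countermodels between the two reducts in both directions. But the ``key observation'' you single out as the crux is wrong as stated, and it is wrong precisely in the direction in which you invoke it. For an arbitrary $J\subseteq M$ satisfying the body of a reduced augmented rule --- indeed even for an arbitrary model $J\subseteq M$ of the whole reduct $F(\mathit{rew}(\Pi),M)$ --- having $q'\in J$ for every $q\in\mathit{dom}(A)$ does \emph{not} force $I\cap\mathit{dom}(A)\subseteq J$: the only item-1 rules surviving in the reduct are the facts $q'\leftarrow$ for $q\notin I$ and the rules $q'\leftarrow q$ for $q\in I$, and both produce implications \emph{towards} $q'$, never from $q'$ back to $q$. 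A countermodel may contain $q'$ (gratuitously, or via the fact when $q\notin I$) while omitting some $q\in I\cap\mathit{dom}(A)$, so ``the $q'$-literals force $I\cap\mathit{dom}(A)\subseteq J$, hence the aggregate behaves like the conjunction $I\cap\mathit{dom}(A)$'' is not a legitimate step for the direction from an $F$-countermodel $J\subsetneq M$ to a $G$-countermodel, which is exactly where you say ``with this in hand\ldots''.

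The repair is small and uses only ingredients already on your page, but the implications must run the other way. To show $J':=J\cap At(\Pi)\models G(r,I)$ you \emph{assume} the $G$-body, i.e.\ $\mathit{pos}(B(r))\cup (I\cap\mathit{dom}(A))\subseteq J'$, and from that derive that $J$ satisfies the $F$-body: $q'\in J$ for each $q\in\mathit{dom}(A)$ (via the fact if $q\notin I$, via $q'\leftarrow q$ if $q\in I$, since then $q\in J$), and $J\models A$ because $I\cap\mathit{dom}(A)\subseteq J\cap\mathit{dom}(A)\subseteq M\cap\mathit{dom}(A)=I\cap\mathit{dom}(A)$; the reduct rule then yields a head atom in $J$, hence in $J'$. (You should also justify $J'\subsetneq I$: if $J'=I$, the item-1 reduct rules force all primed atoms into $J$, whence $J=M$.) Your stated implication is needed, and is true, only for the specific $J=J'\cup\{p'\mid p\notin I\text{ or }p\in J'\}$ you build in the converse direction --- and there it holds by construction of $J$, not ``via the facts and the rules $p'\leftarrow p$''. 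On that converse direction, incidentally, your choice of $J$ is the right one (the paper's own proof adds \emph{all} primed atoms, which makes the corresponding verification delicate for non-convex aggregates; with your $J$, a reduct rule whose conjunction $I\cap\mathit{dom}(A)$ is not contained in $J'$ simply has an unsatisfied body). So: correct architecture and correct constructions, but the central lemma as you state it would fail, and the forward direction needs its converse instead.
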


A drawback of this first compilation is that the evaluation of the resulting program may be on a higher complexity class than the evaluation of the original program.
For example, G-coherence testing of disjunction-free programs is NP-complete in general, while a $\Sigma^P_2$ procedure will be used to test F-coherence of the rewritten program.
Such a drawback motivates the introduction of a second compilation.
To ease the presentation, and to provide a better analysis later, the syntax of the language is extended by allowing the use of \emph{integrity constraints}, that is, rules of the form (\ref{eq:rule}) with empty heads.
Note that the semantics provided in Section~\ref{sec:background} can already cope with such an extension.

\begin{definition}[Rewriting 2]\label{def:str}
Let $\Pi$ be a program.
Let $\mathit{str}(\Pi)$ be the (stratified) program obtained from $\Pi$ by performing the following operations:
\begin{enumerate}
\item For each $p \in At(\Pi)$, two fresh propositional atoms $p',p''$ and the following three groups of rules are introduced:
(i) $p' \leftarrow \naf p$, and $p' \leftarrow p$;
(ii) $p'' \leftarrow \naf\naf p''$;
(iii) $\leftarrow \naf p'', p$, and $\leftarrow p'', \naf p$.

\item
For each aggregate $A$ occurring in a rule $r$ of $\Pi$ and such that $\mathit{dom}(A) = \{p_1,\ldots,p_n\}$ ($n \geq 0$), literals $p_1',\ldots,p_n'$ are added to $B(r)$,
and $A$ is replaced by a new aggregate $A''$ such that $\mathit{dom}(A'') = \{p_1'',\ldots,p_n''\}$ and $A''(I) = A(\{p \in \A \mid p'' \in I\})$, for all $I \subseteq \A$.
\end{enumerate}
\end{definition}

\begin{example}\label{ex:str}
Resorting again to $\Pi_1$ from Example~\ref{ex:syntax}, $\mathit{str}(\Pi_1)$ is the following program:
$\{a \leftarrow \naf\naf a;$
$b \vee c \leftarrow A_1'', a', b'\} \cup \{a' \leftarrow \naf a;$
$b' \leftarrow \naf b;$
$c' \leftarrow \naf c;$
$a' \leftarrow a;$
$b' \leftarrow b;$
$c' \leftarrow c\} \cup \{a'' \leftarrow \naf\naf a'';$
$b'' \leftarrow \naf\naf b'';$
$c'' \leftarrow \naf\naf c''\} \cup \{\leftarrow \naf a'', a;$
$\leftarrow \naf b'', b;$
$\leftarrow \naf c'', c;$
$\leftarrow a'', \naf a;$
$\leftarrow b'', \naf b;$
$\leftarrow c'', \naf c\}$.
where $\mathit{dom}(A_1'') = \{a'',b''\}$ and $A_1''(I) = |\{a'',b''\} \cap I| \geq 1$, for all $I \subseteq \A$.
The F-stable models of $\mathit{str}(\Pi_1)$ are the following:
$\emptyset \cup X$ and $\{a,c\} \cup X \cup \{a'',c''\}$, where $X = \{a',b',c'\}$.
In fact, for atoms $a',b',c'$, comments in Example~\ref{ex:rew} apply.
Atom $a''$ instead is forced to have the same truth value of $a$ because of rules of the group (iii).
Similarly for $b''$ and $b$, and for $c''$ and $c$.
Moreover, atoms $a'',b'',c''$ fix the interpretation of $A_1''$ in the reduct thanks to rules of the group (ii).
\hfill $\blacksquare$
\end{example}

Rules of the group (i) are as in the first compilation, and therefore the already discussed properties on atoms of the form $p'$ hold for $\mathit{str}(\Pi)$.
Rules of the group (ii), instead, are used to guess an interpretation for atoms of the form $p''$.
Actually, they also force the interpretation of atoms of the form $p$ because of rules of the group (iii).
However, while the interpretation of an atom $p''$ is fixed also in the reduct, the interpretation of an atom $p$ can be changed.
Also the interpretation of aggregates is fixed in the reduct because their domains only contain atoms of the form $p''$.
Correctness of the second compilation is finally established.

\begin{restatable}{theorem}{ThmEqStr}\label{thm:eq-str}
Let $\Pi$ be a program.
The following relation holds:
$\mathit{GSM}(\Pi) \equiv_{At(\Pi)} \mathit{FSM}(\mathit{str}(\Pi))$.
\end{restatable}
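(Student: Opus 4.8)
The plan is to exhibit a cardinality-preserving bijection between $\mathit{GSM}(\Pi)$ and $\mathit{FSM}(\mathit{str}(\Pi))$ that restricts to the identity on $At(\Pi)$. For $I\subseteq At(\Pi)$ put $I^\star:=I\cup\{p'\mid p\in At(\Pi)\}\cup\{p''\mid p\in I\}$. First I would show that the models of $\mathit{str}(\Pi)$ are exactly the sets of the form $I^\star$ with $I\models\Pi$: the group-(i) rules make every $p'$ true, the group-(iii) constraints force $p''$ to take the truth value of $p$ (the group-(ii) rules constrain no model), and once a model has this shape each body literal of a rewritten rule is satisfied by $I^\star$ iff the corresponding literal of the original rule is satisfied by $I$ --- for the replaced aggregate this is because $I^\star\cap\mathit{dom}(A'')=\{p''\mid p\in I\cap\mathit{dom}(A)\}$ and $A''(\{p''\mid p\in X\})=A(X)$, and the added guard atoms $p_i'$ are vacuously true in $I^\star$. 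Since $I\mapsto I^\star$ is injective and $I^\star\cap At(\Pi)=I$, and since $X$-stability of a model amounts to being a minimal model of its $X$-reduct, it remains to prove that $I$ is a minimal model of $G(\Pi,I)$ iff $I^\star$ is a minimal model of $F(\mathit{str}(\Pi),I^\star)$.

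Next I would compute $F(\mathit{str}(\Pi),I^\star)$ explicitly. The group-(iii) constraints vanish (their bodies are false in $I^\star$); group (i) leaves the facts $p'\leftarrow$ for $p\notin I$ and the rules $p'\leftarrow p$ for $p\in I$; group (ii) leaves the facts $p''\leftarrow$ for $p\in I$ (the body $\naf\naf p''$ is a negative literal, so it is deleted, and the rule survives precisely when $p\in I$); and each $r\in\Pi$ with $I\models B(r)$ yields $H(r)\leftarrow\mathit{pos}(r),\{A''\mid A\in B(r)\},\{q'\mid q\in\mathit{dom}(A),\,A\in B(r)\}$, where $\mathit{pos}(r)$ is the set of positive propositional atoms in $B(r)$. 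This is to be compared with $G(r,I)=H(r)\leftarrow\mathit{pos}(r),\bigcup_{A\in B(r)}(I\cap\mathit{dom}(A))$. The crucial point is that in the F-reduct the doubly-primed atoms are pinned down by the facts $p''\leftarrow$, so every $A''$ is again satisfied by any $J^\star\subseteq I^\star$ that contains those facts, while a guard atom $q'$ is forced into such a $J^\star$ exactly when $q\notin I$ or $q$ is kept; choosing the primed part of $J^\star$ minimally thus makes the guards behave like the literal set $I\cap\mathit{dom}(A)$ that the G-reduct substitutes for $A$.

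Concretely, for the forward direction I would take $J^\star\subsetneq I^\star$ with $J^\star\models F(\mathit{str}(\Pi),I^\star)$, note that the facts and the rules $p'\leftarrow p$ force $\{p''\mid p\in I\}\subseteq J^\star$ and $(At(\Pi)\setminus I)\cup(J^\star\cap At(\Pi))\subseteq\{p\mid p'\in J^\star\}$, whence $J:=J^\star\cap At(\Pi)\subsetneq I$; then if $J$ satisfies the body of $G(r,I)$ one checks $J^\star$ satisfies the body of the corresponding F-reduct rule (each guard $q'$ is in $J^\star$: by a fact if $q\notin I$, and because $q\in I\cap\mathit{dom}(A)\subseteq J$ otherwise), so $J\cap H(r)=J^\star\cap H(r)\neq\emptyset$, i.e. $J\models G(\Pi,I)$. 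For the converse I would take $J\subsetneq I$ with $J\models G(\Pi,I)$ and set $J^\star:=J\cup\{p'\mid p\in(At(\Pi)\setminus I)\cup J\}\cup\{p''\mid p\in I\}$; this is a proper subset of $I^\star$, it satisfies the group-(i) and group-(ii) part of the reduct by construction, and for $r\in\Pi$ its body is satisfied iff $\mathit{pos}(r)\subseteq J$ and $q'\in J^\star$ for every $q\in\mathit{dom}(A)$, $A\in B(r)$ --- equivalently $I\cap\mathit{dom}(A)\subseteq J$ --- which is exactly the body of $G(r,I)$, so $J^\star\cap H(r)=J\cap H(r)\neq\emptyset$. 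Hence $I\in\mathit{GSM}(\Pi)$ iff $I^\star\in\mathit{FSM}(\mathit{str}(\Pi))$; combined with the first step (and the fact that stable models contain no atom occurring nowhere, so every F-stable model of $\mathit{str}(\Pi)$ is some $I^\star$), $I\mapsto I^\star$ is a bijection $\mathit{GSM}(\Pi)\to\mathit{FSM}(\mathit{str}(\Pi))$ with $I^\star\cap At(\Pi)=I$, which is exactly $\mathit{GSM}(\Pi)\equiv_{At(\Pi)}\mathit{FSM}(\mathit{str}(\Pi))$.

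The main obstacle I anticipate is the converse half of the last step: it only goes through if the primed part of $J^\star$ is taken to be precisely $(At(\Pi)\setminus I)\cup J$ rather than all of $\{p'\mid p\in At(\Pi)\}$, since only then does a guard atom $q'$ become false for $q\in I\setminus J$ and make the body of a rewritten rule fail exactly when the G-reduct's body $I\cap\mathit{dom}(A)$ fails. Verifying that this minimal decoration still satisfies every auxiliary rule of the reduct, together with the companion observation that the doubly-primed atoms genuinely freeze the aggregates $A''$ in the reduct (so aggregates contribute nothing to minimality), is where the argument needs care; the remainder parallels the proof of Theorem~\ref{thm:eq-rew}.
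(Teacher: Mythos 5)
Your proof is correct, and it uses the same bijection $I \mapsto I^\star = I \cup \{p' \mid p \in At(\Pi)\} \cup \{p'' \mid p \in I\}$ and the same overall strategy (transfer of counter-models between $G(\Pi,I)$ and $F(\mathit{str}(\Pi),I^\star)$) as the paper. The substantive difference is in the converse direction, and the ``obstacle'' you flag is genuine: the paper decorates a counter-model $J$ of $G(\Pi,I)$ with \emph{all} primed atoms, i.e.\ it takes $J' = J \cup \{p' \mid p \in At(\Pi)\} \cup \{p'' \mid p \in I\}$ and simply asserts $J' \models F(\mathit{str}(\Pi),I^\star)$. As you observe, with all guards $q'$ true and the aggregates $A''$ frozen at their value under $I$, the body of a rewritten reduct rule is satisfied by $J'$ as soon as $\mathit{pos}(r) \subseteq J$, whereas $J \models G(r,I)$ only yields $H(r)\cap J \neq \emptyset$ when additionally $I \cap \mathit{dom}(A) \subseteq J$; so the paper's asserted step is not justified as written (it is rescued only in hindsight, because in context $J$ turns out to equal $I \cap At(\Pi)$). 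Your choice of the \emph{minimal} primed decoration $\{p' \mid p \in (At(\Pi)\setminus I)\cup J\}$ is exactly what makes the guard atoms mimic the substituted set $I \cap \mathit{dom}(A)$ of the G-reduct, so the rewritten reduct rule's body is satisfied by $J^\star$ if and only if the body of $G(r,I)$ is satisfied by $J$, and modelhood transfers both ways; checking that this leaner decoration still satisfies the auxiliary rules of the reduct is easy, as you note. In short: same compilation, same mapping, same forward direction, but your converse direction is the more careful (and in fact the fully rigorous) version of the paper's argument.
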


\subsection{Properties}

The rewritings introduced in the previous section are \emph{polynomial, faithful and modular} translation functions \cite{DBLP:journals/jancl/Janhunen06}, i.e., they are polynomial-time computable, preserve stable models (if auxiliary atoms are ignored), and can be computed independently on parts of the input program.
In fact, faithfulness is preserved because of Theorems~\ref{thm:eq-rew}--\ref{thm:eq-str}, and modularity can be easily proved by assuming that different auxiliary atoms are used for different parts of the program.

\begin{restatable}{theorem}{ThmModular}\label{thm:modular}
Let $\Pi,\Pi'$ be programs such that $\Pi \cap \Pi' = \emptyset$.
For $tr \in \{\mathit{rew}, \mathit{str}\}$, the following conditions are satisfied:
$tr(\Pi \cup \Pi') = tr(\Pi) \cup tr(\Pi')$, and
$tr(\Pi) \cap tr(\Pi') = \emptyset$.
\end{restatable}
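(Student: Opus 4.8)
The plan is to verify both identities by direct inspection of Definitions~\ref{def:rew} and~\ref{def:str}. The key structural observation is that both $\mathit{rew}$ and $\mathit{str}$ are composed of two \emph{local} actions. The first is \emph{atom-local} (item~1 in each definition): for every $p \in At(\Pi)$ it emits a fixed bundle of rules built from $p$ and its fresh companions $p'$ (and, for $\mathit{str}$, also $p''$), independently of the rest of $\Pi$. The second is \emph{rule-local} (item~2): it maps each rule $r \in \Pi$ to a single rewritten rule $tr(r)$ obtained by adding to $B(r)$ the literals $p_i'$ for the atoms $p_i$ in the domains of the aggregates occurring in $r$ and, for $\mathit{str}$, also replacing each such aggregate $A$ by $A''$; again this depends only on $r$. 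I will carry out the argument for $\mathit{str}$; the case of $\mathit{rew}$ is entirely analogous and simpler, as it introduces no atoms of the form $p''$ and leaves the aggregates themselves unchanged.

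For the equality $tr(\Pi \cup \Pi') = tr(\Pi) \cup tr(\Pi')$ I would argue as follows. Since $At(\Pi \cup \Pi') = At(\Pi) \cup At(\Pi')$, running the atom-local action on $\Pi \cup \Pi'$ produces exactly the union of the bundles it produces for $At(\Pi)$ and for $At(\Pi')$ (a bundle generated by an atom common to both parts is produced twice, and set union absorbs the duplicate). Running the rule-local action on $\Pi \cup \Pi'$ produces $\{\,tr(r) \mid r \in \Pi \cup \Pi'\,\} = \{\,tr(r) \mid r \in \Pi\,\} \cup \{\,tr(r) \mid r \in \Pi'\,\}$ because $tr(r)$ does not depend on which program supplies $r$. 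Taking the union of the outputs of the two actions yields the claim; note that this half holds for arbitrary $\Pi,\Pi'$.

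The disjointness $tr(\Pi) \cap tr(\Pi') = \emptyset$ is the part I expect to require the most care, since it hinges on the fresh-atom bookkeeping. I would partition each of $tr(\Pi)$, $tr(\Pi')$ into its atom-local rules and its rewritten rules, and rule out a coincidence across the four combinations. First, a rewritten rule $tr(r)$ has the same head as $r$, hence a head consisting of original atoms (or an empty head, for an integrity constraint), whereas every atom-local rule either has as head the fresh atom $p'$ or $p''$, or has an empty head and a body containing $\naf p''$ or $p''$ as a literal; since no rewritten rule has $p''$ or $\naf p''$ among its body literals (item~2 adds only single-primed positive literals and the replacement aggregate $A''$), no rewritten rule can coincide with an atom-local rule, which disposes of the two mixed combinations. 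Second, two rewritten rules $tr(r) = tr(r')$ force $r = r'$, because $tr$ restricted to rules is invertible: one removes from the body the added literals $p_i'$ (recognizable since they range over fresh atoms) and, for $\mathit{str}$, recovers $A$ from $A''$, the map $A \mapsto A''$ being a bijection by construction; since $\Pi \cap \Pi' = \emptyset$, the rewritten rules of $\Pi$ and $\Pi'$ are disjoint. Third, two atom-local rules coincide only if generated by the same program atom, and these sets are disjoint by the convention stated just before the theorem that different auxiliary atoms — and hence different program atoms — are used for the two parts. Collecting the three cases gives $tr(\Pi) \cap tr(\Pi') = \emptyset$.

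The single genuinely delicate point is the invertibility of the rule rewriting used in the second case above: I must confirm that the primed literals added to bodies and the replacement aggregates $A''$ are always syntactically distinguishable from anything appearing in an original rule, which follows from the freshness of the atoms $p',p''$ but should be stated explicitly. Everything else is routine set-theoretic bookkeeping, and the argument for $\mathit{rew}$ is obtained by simply deleting every mention of the atoms $p''$ and the aggregates $A''$.
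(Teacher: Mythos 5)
Your argument is correct and takes essentially the same route as the paper, whose entire proof is the one-line remark that the rewritings operate one rule at a time; you merely spell out the locality of the atom-wise and rule-wise actions and the freshness/injectivity bookkeeping the paper leaves implicit, including the same appeal (for disjointness of the atom-local bundles) to the convention, stated just before the theorem, that different auxiliary atoms are used for different parts. The only nit is your parenthetical ``and hence different program atoms'': the hypothesis $\Pi \cap \Pi' = \emptyset$ concerns rules and does not exclude shared program atoms, so the disjointness of the atom-local bundles rests on the per-part freshness of the auxiliary atoms alone, exactly as in the paper's informal convention.
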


It is also possible to show that the rewritings are polynomial-time computable and have linear size with respect to the original program.
For this purpose, the size of a program $\Pi$, denoted $\norm{\Pi}$, is defined as the number of symbols occurring in $\Pi$.
In more detail, every occurrence of a propositional atom or of a negated literal is considered one symbol, while every occurrence of an aggregate $A$ is counted as $|\mathit{dom}(A)|$ symbols.
(No other symbol is considered in the size of $\Pi$.)
The rewriting in Definition~\ref{def:rew} introduces $|At(\Pi)|$ new propositional atoms, and $2 \cdot |At(\Pi)|$ new rules of size 2.
Moreover, for each rule $r$ in $\Pi$, program $\mathit{rew}(\Pi)$ contains a rule of size at most $2 \cdot \norm{r}$ (because for each aggregate $A$ in $r$, $|\mathit{dom}(A)|$ propositional atoms are added to the body of $r$).
The rewriting in Definition~\ref{def:str}, instead, introduces $2 \cdot |At(\Pi)|$ new propositional atoms, and $5 \cdot |At(\Pi)|$ new rules of size 2.
The other rules in $\mathit{str}(\Pi)$ are obtained from rules in $\Pi$ and have the same size of the corresponding rules in $\mathit{rew}(\Pi)$.

\begin{restatable}{theorem}{ThmSizeRew}\label{thm:size-rew}\label{thm:size-str}
Let $\Pi$ be a program.
The programs $\mathit{rew}(\Pi)$ and $\mathit{str}(\Pi)$ are polynomial-time constructible, and the following relations holds:
(i) $\norm{\mathit{rew}(\Pi)} \leq 4 \cdot |At(\Pi)| + 2 \cdot \norm{\Pi}$;
(ii) $\norm{\mathit{str}(\Pi)} \leq 10 \cdot |At(\Pi)| + 2 \cdot \norm{\Pi}$.
\end{restatable}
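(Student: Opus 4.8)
The plan is to prove the two size bounds by a direct symbol count that follows the structure of Definitions~\ref{def:rew} and~\ref{def:str} exactly, and then to observe that each rewriting is produced by a trivial syntactic pass over $\Pi$, which gives polynomial-time (indeed linear-time) constructibility.

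The key arithmetic observation to isolate first is that, for every rule $r$, the quantity $\sum |\mathit{dom}(A)|$, summed over the aggregate atoms $A$ occurring in $r$, is at most $\norm{r}$, because each such aggregate already contributes exactly $|\mathit{dom}(A)|$ symbols to $\norm{r}$. Both rewritings transform an original rule $r$ only in its body: they append the primed atoms $p_1',\dots,p_n'$ for each aggregate $A$ in $r$ with $\mathit{dom}(A) = \{p_1,\dots,p_n\}$, and $\mathit{str}$ additionally replaces $A$ by $A''$, whose domain has the same cardinality $n$ and hence the same symbol cost. Thus the image of $r$ has size at most $\norm{r} + \sum |\mathit{dom}(A)| \leq 2\norm{r}$, and summing over $r \in \Pi$ contributes at most $2\norm{\Pi}$ in each case.

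It then remains to count the auxiliary rules from item~1 of each definition. For $\mathit{rew}$: for each of the $|At(\Pi)|$ atoms $p$ we add $p' \leftarrow \naf p$ and $p' \leftarrow p$, two rules of size~$2$, for a total of $4\cdot|At(\Pi)|$ symbols; combined with the $\leq 2\norm{\Pi}$ bound above this yields~(i). For $\mathit{str}$: per atom $p$ we add the two rules of group~(i), the single rule $p'' \leftarrow \naf\naf p''$ of group~(ii) (whose body $\naf\naf p''$ is a single negated literal, hence one symbol, by the syntactic conventions of Section~\ref{sec:background}), and the two integrity constraints $\leftarrow \naf p'', p$ and $\leftarrow p'', \naf p$ of group~(iii) --- five rules, each of size~$2$, for $10\cdot|At(\Pi)|$ symbols --- and adding $\leq 2\norm{\Pi}$ gives~(ii). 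Here one should verify explicitly that the empty heads of the constraints and the freshly introduced aggregates $A''$ contribute nothing beyond what is already counted.

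For the complexity claim I would note that $\mathit{rew}$ and $\mathit{str}$ are computed by one loop over $At(\Pi)$, emitting a constant number of size-$2$ rules per atom, followed by one loop over the rules of $\Pi$, emitting for each $r$ a single rule describable in time linear in $\norm{r}$ plus the sizes of the aggregate domains in $r$; by the bounds in~(i) and~(ii) the total output is linear in $|At(\Pi)| + \norm{\Pi}$, so the construction runs in linear time. I do not expect a real obstacle: the only point demanding care is adherence to the symbol-counting convention --- in particular treating a chain of $\naf$'s as a single literal, and recognizing that the primed literals appended to a rule body are already paid for by the $|\mathit{dom}(A)|$ charged to each aggregate, which is what keeps the multiplicative blow-up at the factor $2$ rather than something larger.
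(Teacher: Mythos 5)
Your proposal is correct and follows essentially the same route as the paper's own proof: a direct symbol count showing that the per-atom auxiliary rules contribute $4\cdot|At(\Pi)|$ (resp.\ $10\cdot|At(\Pi)|$) symbols and that each original rule $r$ yields a rewritten rule of size at most $2\cdot\norm{r}$, because the appended primed atoms (and, for $\mathit{str}$, the domain of $A''$) are already accounted for by the $|\mathit{dom}(A)|$ symbols charged to each aggregate. Your write-up is in fact somewhat more explicit than the paper's (e.g., on the size-2 count for the double-negation and constraint rules and on linear-time constructibility), but there is no substantive difference in approach.
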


There are a few additional observations concerning the rewritings presented in the previous section, which also positively affect their sizes.
The first observation is that fresh atoms could be added just for propositional atoms belonging to the domain of some aggregate occurring in $\Pi$.
In fact, note that atoms $c',c''$ are not required in the rewritings reported in Examples~\ref{ex:rew}--\ref{ex:str}.
Such atoms are included in Definitions~\ref{def:rew}--\ref{def:str} to simplify the presentation of the rewritings.
The second observation is more technical and concerns the implementations of current ASP solvers, which are essentially based on $F$-stable model semantics.
ASP solvers use two modules, called \emph{model generator} and \emph{model checker}.
The first module produces a model $I$ of the input program $\Pi$, while the second module tests the stability of $I$, i.e., it checks whether no strict subset of $I$ is a model of $F(\Pi,I)$.
In both rewritings, atoms of the form $p'$ are irrelevant for the model generator, in the sense that they are immediately derived true.
Hence, the search space of the model generator is not increased at all when $\mathit{rew}(\Pi)$ is processed.
A similar observation also applies to $\mathit{str}(\Pi)$.
Indeed, atoms of the form $p''$ are constrained to have the same truth values of the corresponding atoms of the form $p$ because of rules of the group (3).
In addition, atoms of the form $p''$ are irrelevant for the model checker because their interpretation is fixed in this module by rules of the group (2).
As a consequence, also the interpretation of all aggregates in $\mathit{str}(\Pi)$ is fixed in the model checker because their domains only comprise atoms of the form $p''$.

\begin{restatable}{theorem}{ThmSearchSpace}\label{thm:search-space}
Let $\Pi$ be a program, and $I$ be an interpretation.
If $I \models \mathit{rew}(\Pi)$ or $I \models \mathit{str}(\Pi)$ then $\{p' \mid p \in At(\Pi)\} \subseteq I$.
Moreover, for each $J \subseteq I$ such that $J \models F(\mathit{str}(\Pi),I)$, it holds that $\{p'' \mid p \in I\} \subseteq J$.
\end{restatable}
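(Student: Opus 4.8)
The plan is to verify the two claims directly from Definitions~\ref{def:rew} and~\ref{def:str}, since both are essentially syntactic observations about the rules that the rewritings introduce, together with an inspection of the F-reduct of the double-negation rules.

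First I would establish the inclusion $\{p' \mid p \in At(\Pi)\} \subseteq I$. Fix $p \in At(\Pi)$. Both $\mathit{rew}(\Pi)$ and $\mathit{str}(\Pi)$ contain the two rules $p' \leftarrow \naf p$ and $p' \leftarrow p$ (item~1 of Definition~\ref{def:rew}, respectively group~(i) of Definition~\ref{def:str}). If $p \in I$, then $I \models p$, so satisfaction of $p' \leftarrow p$ forces $p' \in I$; if $p \notin I$, then $I \models \naf p$, so satisfaction of $p' \leftarrow \naf p$ forces $p' \in I$. In either case $p' \in I$, and since $I$ is a model of whichever of the two rewritten programs is at hand, this yields the first part of the statement.

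For the second part I would argue as follows. Since $I \models \mathit{str}(\Pi)$, the integrity constraints of group~(iii) are satisfied by $I$; in particular $\leftarrow \naf p'', p$ forbids having simultaneously $p \in I$ and $p'' \notin I$, so $p \in I$ implies $p'' \in I$ for every $p \in At(\Pi)$. Next I would compute the relevant fragment of $F(\mathit{str}(\Pi), I)$: for the group~(ii) rule $p'' \leftarrow \naf\naf p''$, the body $\{\naf\naf p''\}$ is satisfied by $I$ exactly when $I \not\models \naf p''$, i.e.\ when $p'' \in I$, and in that case the definition of the F-reduct deletes the negative literal $\naf\naf p''$, so $F(\mathit{str}(\Pi), I)$ contains the fact $p'' \leftarrow$ whenever $p'' \in I$. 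Now take any $J \subseteq I$ with $J \models F(\mathit{str}(\Pi), I)$: for every $p \in I$ we have $p'' \in I$ by the previous observation, hence the fact $p'' \leftarrow$ belongs to the reduct, hence $p'' \in J$. This gives $\{p'' \mid p \in I\} \subseteq J$, completing the proof.

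The argument involves no genuinely difficult step; the only point that requires care is the treatment of the rule $p'' \leftarrow \naf\naf p''$ in the F-reduct, where one must notice that $\naf\naf p''$ is a negative literal for the purpose of $F(l,I)$ (so it is erased in the reduct), while as a body literal it is satisfied by $I$ precisely when $p'' \in I$, so that exactly the facts in $\{p'' \leftarrow \mid p'' \in I\}$ survive and constrain every model $J$ of the reduct.
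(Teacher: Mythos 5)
Your proof is correct and follows essentially the same route as the paper's: the group~(i) rules force $p' \in I$, and the F-reduct of $p'' \leftarrow \naf\naf p''$ yields the fact $p'' \leftarrow$ for every $p'' \in I$, which pins $p''$ true in every $J \models F(\mathit{str}(\Pi),I)$. You are in fact slightly more careful than the paper, which leaves implicit the step you make explicit via the group~(iii) constraints, namely that $I \models \mathit{str}(\Pi)$ gives $p \in I \Rightarrow p'' \in I$, needed to pass from facts $p'' \leftarrow$ for $p'' \in I$ to the stated inclusion $\{p'' \mid p \in I\} \subseteq J$.
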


A final observation, which is eventually linked to the previous, is that current ASP solvers only rely on the model generator to process disjunction-free programs.
More specifically, this is the case if additionally non-convex aggregates are stratified.
As already observed, the rewriting in Definition~\ref{def:str} is such that all aggregates in the rewritten program are stratified.
Moreover, note that $\mathit{str}(\Pi)$ does not introduce disjunction in $\Pi$ (this is also true for $\mathit{rew}(\Pi)$).

\begin{restatable}{theorem}{ThmStratified}\label{thm:stratified}
Let $\Pi$ be a program.
All aggregates in $\mathit{str}(\Pi)$ are stratified, and if $\Pi$ has no disjunction then both $\mathit{rew}(\Pi)$ and $\mathit{str}(\Pi)$ have no disjunction.
\end{restatable}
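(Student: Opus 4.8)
The plan is to establish the two claims separately; both follow by direct inspection of Definitions~\ref{def:rew} and~\ref{def:str}, with no deep argument required.

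For the absence of disjunction, I would observe that neither rewriting ever produces a rule with more than one head atom other than by copying a head of $\Pi$. Indeed, $\mathit{rew}$ only adds the two rules $p' \leftarrow \naf p$ and $p' \leftarrow p$ (each with the singleton head $\{p'\}$) and, in step~(2), appends literals $p_1',\dots,p_n'$ to the body of an existing rule while leaving its head untouched; likewise $\mathit{str}$ adds the rules of groups~(i) and~(ii) (each with a singleton head), the integrity constraints of group~(iii) (each with empty head), and in step~(2) modifies only bodies of rules of $\Pi$. Hence every rule of $\mathit{rew}(\Pi)$ and of $\mathit{str}(\Pi)$ either has head $H(r)$ for some $r\in\Pi$ or has a head of size at most one; so if $H(r)$ is a singleton or empty for every $r\in\Pi$, the same holds for $\mathit{rew}(\Pi)$ and $\mathit{str}(\Pi)$.

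For the stratification of aggregates I would exhibit an explicit level mapping. Recall from \cite{DBLP:journals/ai/FaberPL11} that the aggregates of a program $P$ are stratified if there is a map $\|\cdot\|$ from the atoms of $P$ to $\mathbb{N}$ such that for every rule $r\in P$ and every $p\in H(r)$: (i)~$\|q\|\le\|p\|$ for every atom $q$ occurring (possibly negated) in an ordinary literal of $B(r)$, and (ii)~$\|q\|<\|p\|$ for every $q\in\mathit{dom}(A)$ with $A$ an aggregate in $B(r)$ (integrity constraints, having no head atom, impose no condition). The key structural fact is that, by step~(2) of Definition~\ref{def:str}, every aggregate $A''$ in $\mathit{str}(\Pi)$ satisfies $\mathit{dom}(A'')\subseteq\{p''\mid p\in At(\Pi)\}$, whereas atoms of the form $p''$ occur in the head of only the double-negation rules $p''\leftarrow\naf\naf p''$ of group~(ii). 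I would therefore set $\|p''\|=0$ for each $p\in At(\Pi)$ and $\|q\|=1$ for every other atom $q$ of $\mathit{str}(\Pi)$ (i.e.\ for the original atoms and the atoms $p'$), and then verify (i)--(ii) by cases: the rules of group~(ii) and the constraints of group~(iii) contain no aggregate and trivially satisfy~(i); the rules of group~(i) contain no aggregate and have head and body atoms at level~$1$; and a rule obtained from some $r\in\Pi$ in step~(2) keeps its head among the original atoms (level~$1$), so each aggregate $A''$ in its body has all domain atoms at level~$0<1$, fulfilling~(ii), and every ordinary body literal (original atoms, negated original atoms, added $p_i'$) is over an atom at level~$\le 1$, fulfilling~(i).

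The only delicate points are bookkeeping rather than substance: that $At(\Pi)$, $\{p'\mid p\in At(\Pi)\}$ and $\{p''\mid p\in At(\Pi)\}$ are pairwise disjoint (so $\|\cdot\|$ is well defined), that the self-dependency created by $p''\leftarrow\naf\naf p''$ is harmless since it passes through no aggregate (which is exactly why aggregate-stratification is not affected by recursive negation), and that integrity constraints need not be assigned a head level. If one prefers the dependency-graph formulation of stratification, I would instead argue directly that no cycle of the dependency graph of $\mathit{str}(\Pi)$ traverses an aggregate-induced edge: such an edge leads from some $q''$ to an original head atom, but $q''$ is defined only by $q''\leftarrow\naf\naf q''$ and hence has no incoming edge from any original or primed atom, so it cannot be reached back. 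I expect this last reachability remark, i.e.\ making precise that $p''$-atoms are "source-like" in the dependency structure, to be the main thing to state carefully; everything else is routine verification.
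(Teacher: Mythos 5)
Your proof is correct and takes essentially the same route as the paper: the disjunction claim is handled by direct inspection, and the stratification claim rests on the identical structural fact that every aggregate in $\mathit{str}(\Pi)$ has its domain inside $\{p'' \mid p \in At(\Pi)\}$ while atoms $p''$ are defined only by the group~(ii) rules $p'' \leftarrow \naf\naf p''$. The paper formalizes this via singleton strongly connected components of the dependency graph, which coincides with your alternative ``source-like $p''$'' remark; your explicit level mapping is just an equivalent packaging of the same observation.
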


Therefore, checking G-coherence of $\Pi$ by means of checking F-coherence of $\mathit{str}(\Pi)$ is an appropriate technique from the complexity point of view, with the only corner case of ASP(M), i.e., programs without negation and disjunction, and whose aggregates are all monotone.
A similar comment applies to performing G-cautious reasoning on $\Pi$ by means of F-cautious reasoning on $\mathit{str}(\Pi)$.
In fact, in the disjunction-free case, the rewriting in Definition~\ref{def:str} provides alternative proofs for NP-membership of G-coherence testing and co-NP-membership of G-cautious reasoning.

\section{Related Work}\label{sec:related}

The challenge of extending stable model semantics with aggregate constructs has been investigated quite intensively in the previous decade.
Among the many proposals, F-stable model semantics \cite{DBLP:journals/tocl/Ferraris11,DBLP:journals/ai/FaberPL11} is of particular interest as many ASP solvers are currently based on this semantics \cite{DBLP:journals/ai/GebserKS12,DBLP:journals/tplp/FaberPLDI08}.
Actually, the definition provided in Section~\ref{sec:background} is slightly different than those in \cite{DBLP:journals/tocl/Ferraris11,DBLP:journals/ai/FaberPL11}.
In particular, the language considered in \cite{DBLP:journals/tocl/Ferraris11} has a broader syntax allowing for arbitrary nesting of propositional formulas.
The language considered in \cite{DBLP:journals/ai/FaberPL11}, instead, does not allow explicitly the use of double negation, which however can be simulated by means of auxiliary atoms.
For example, in \cite{DBLP:journals/ai/FaberPL11} a rule $p \leftarrow \naf\naf p$ must be modeled by using a fresh atom $p^F$ and the following subprogram:
$\{p \leftarrow \naf p^F;$ $p^F \leftarrow \naf p\}$.
On the other hand, negated aggregates are permitted in \cite{DBLP:journals/ai/FaberPL11}, while they are forbidden in this paper.
Actually, programs with negated aggregates are those for which \cite{DBLP:journals/tocl/Ferraris11} and \cite{DBLP:journals/ai/FaberPL11} disagree.
As a final remark, the reduct of \cite{DBLP:journals/ai/FaberPL11} does not remove negated literals from bodies, which however are necessarily true in all counter-models because double negation is not allowed in the syntax considered by \cite{DBLP:journals/ai/FaberPL11}.

Other relevant stable model semantics for logic programs with aggregates are reported in \cite{DBLP:journals/tplp/PelovDB07,DBLP:journals/tplp/SonP07} for disjunction-free programs, recently extended to the disjunctive case in \cite{DBLP:journals/ai/ShenWEFRKD14}.
In these semantics the stability check is not given in terms of minimality of the model for the program reduct but obtained by means of a fixpoint operator similar to immediate consequence, and the following relation holds in general: stable models of \cite{DBLP:journals/ai/ShenWEFRKD14} are a selection of F-stable models, and they coincide up to ASP($\neg$,M,C), which is also the complexity boundary between the first and second level of the polynomial hierarchy for F-stable model semantics \cite{DBLP:conf/lpnmr/AlvianoF13}.
Finally, a more recent proposal is G-stable model semantics \cite{DBLP:journals/tplp/GelfondZ14}, whose relation with other semantics has been highlighted by \cite{DBLP:conf/ijcai/AlvianoF15} in the disjunction-free case:
G-stable models are F-stable models, but the converse is not always true.

A detailed complexity analysis for F-stable models is reported in \cite{DBLP:journals/ai/FaberPL11} and summarized in Table~\ref{tab:complexity}.
Complexity of reasoning under stable models by \cite{DBLP:journals/tplp/PelovDB07,DBLP:journals/tplp/SonP07}, instead, is analyzed in \cite{pelo-2004}, where in particular $\Sigma^P_2$-completeness of coherence testing is proved for disjunction-free programs with aggregates.
Concerning G-stable models, the general case was studied in \cite{DBLP:journals/tplp/GelfondZ14}, and a more detailed analysis is provided by this paper.
In particular, for disjunction-free programs, the main reasoning tasks are in the first level of the polynomial hierarchy in general when G-stable models are used.
On the other hand, coherence testing jumps from $\mathbf{K}$ to $\Sigma^P_2$ when F-stable models are replaced by G-stable models in programs with monotone aggregates only.
Indeed, in constrast to previous semantics, monotone aggregates are enough to simulate integrity constraints and negation when G-stable models are used.

Techniques to rewrite logic programs with aggregates into equivalent aggregate-free programs were also investigated in the literature.
For example, a rewriting into aggregate-free programs is presented by \cite{DBLP:journals/tocl/Ferraris11} for F-stable model semantics.
However, it must be noted that the rewriting of \cite{DBLP:journals/tocl/Ferraris11} produces nested expressions in general, and current mainstream ASP systems cannot process directly such constructs, but instead require additional translations such as those by \cite{DBLP:conf/lpnmr/LeeP09}.
Other relevant rewriting techniques were proposed in \cite{DBLP:conf/lpnmr/BomansonJ13,DBLP:conf/jelia/BomansonGJ14}, also proved to be quite efficient in practice.
However, these rewritings preserve F-stable models only in the stratified case, or if recursion is limited to convex aggregates.

Aggregate functions are also semantically similar to DL \cite{DBLP:journals/ai/EiterILST08} and HEX atoms \cite{DBLP:journals/jair/EiterFKRS14}, extensions of ASP for interacting with external knowledge bases, possibly expressed in different languages.

\section{Conclusion}\label{sec:conclusion}

G-stable models are a recent proposal for interpreting logic programs with aggregates.
A detailed complexity analysis of the main reasoning tasks for this new semantics was reported in Section~\ref{sec:complexity}, highlighting similarities and differences versus mainstream ASP semantics, here referred to as F-stable models.
In more detail, G-coherence testing is NP-complete for disjunction-free programs, in contrast to $\Sigma^P_2$-completeness of F-coherence testing.
An even more surprising result was shown for negation-free programs with monotone aggregates:
Such programs are guaranteed to be F-coherent, 
while G-coherence testing was shown to be $\Sigma^P_2$-hard because negation can be simulated by means of disjunction and monotone aggregates in the new semantics. 
Similar results were shown for G-cautious reasoning.

A further link between G- and F-stable models is provided by the rewritings in Section~\ref{sec:compilation}:
G-stable models of an input program can be obtained by computing F-stable models of a rewritten program, where the size of the rewritten program is linear with respect to the size of the original program.
In particular, two different rewritings are presented and analyzed.
Moreover, one of these rewritings outputs programs with stratified aggregates only, which are handled efficiently by modern ASP solvers.
A prototype system supporting common aggregation functions such as \COUNT, \SUM, \AVG, \MIN, \MAX, \ODD, and \EVEN is thus implemented by means of this rewriting, and using the ASP solver \textsc{wasp} \cite{DBLP:conf/lpnmr/AlvianoDFLR13,DBLP:journals/tplp/AlvianoDR14} to obtain G-stable models of the original program.
%

\section*{Acknowledgement}
This work was partially supported by MIUR within project ``SI-LAB BA2KNOW  -- Business Analitycs to Know'', and by Regione Calabria, POR Calabria FESR 2007-2013, within project ``ITravel PLUS'' and project ``KnowRex''. 
Mario Alviano was partly supported by the National Group for Scientific Computation (GNCS-INDAM), and by Finanziamento Giovani Ricercatori UNICAL.

\bibliographystyle{acmtrans}
\bibliography{bibtex}

\clearpage
\appendix

\section{Proofs of Section~\ref{sec:complexity}}

\LemTpPoly*
\begin{proof}
We first show that the least fixpoint of $T_\Pi$ is polytime computable.
Let $\Pi$ be a program in ASP(M), and $I$ be an interpretation.
Computing $T_\Pi(I)$ requires to iterate over every rule $r$ of $\Pi$ and check whether $I \models B(r)$.
Checking $I \models B(r)$ can be done in polynomial-time if aggregates are polynomial-time computable functions, as it is assumed in this section.
Hence, a single application of $T_\Pi$ is polynomial-time computable.
The least fixpoint of $T_\Pi$ is computed, by definition, starting from $\emptyset$ and repeatedly applying $T_\Pi$.
Define $I_0 = \emptyset$, $I_{i+1} = T_\Pi(I_i)$ (for $i \geq 0$).
For each $i \geq 0$, either $I_{i+1} \setminus I_i \neq \emptyset$ or $I_i$ is the least fixpoint of $T_\Pi$.
Since atoms in $I_{i+1} \setminus I_i$ are among those in $At(\Pi)$, we have that $I_{|At(\Pi)|} = I_{|At(\Pi)|+1}$.

We now show the second part of the lemma.
$I \models \Pi$ by construction.
Note that $G(\Pi,I)$ is a plain Datalog program.
It is unique minimal model is the least fixpoint of $T_{G(\Pi,I)}$, i.e., interpretation $J$.
Hence, $I \in \mathit{GSM}(\Pi)$ if and only if $I = J$.
To complete the proof is enough to show that no other interpretation is a G-stable model of $\Pi$.
Let $K$ be an interpretation such that $K \neq I$ and $K \models \Pi$.
Therefore, $K \supset I$ because $I$ is the least fixpoint of $T_\Pi$.
To prove that $K \notin \mathit{GSM}(\Pi)$ note that $I \models G(\Pi,K)$.
\end{proof}

\ThmGcoDefP*
\begin{proof}
Let $I$ be the least fixpoint of $T_\Pi$.
$I$ is computable in polynomial-time because of Lemma~\ref{lem:tp-poly}.
Actually, $I$ is the only candidate to be a G-stable model of $\Pi$ because of Lemma~\ref{lem:tp-sem}.
To check whether $I \in \mathit{GSM}(\Pi)$, build $G(\Pi,I)$ and compute the least fixpoint of $T_{G(\Pi,I)}$, again in polynomial-time because of Lemma~\ref{lem:tp-poly}.
If the two least fixpoints are equal then $\Pi$ is G-coherent, otherwise it is G-incoherent.
\end{proof}

\ThmGcoNormalNp*
\begin{proof}
Let $\Pi$ be in ASP($\neg$, M, C, N), and $I$ be an interpretation.
We provide a polynomial-time procedure for checking that $I$ is a G-stable model of $\Pi$.
The procedure first checks that $I \models \Pi$ in polynomial-time.
If it is the case, the procedure builds the reduct $G(\Pi,I)$, again in polynomial-time.
Program $G(\Pi,I)$ is in ASP($-$) and therefore Lemma~\ref{lem:tp-poly} can be applied to obtain the unique minimal model of $G(\Pi,I)$, say $J$, in polynomial-time.
If $I = J$ then the procedure accepts $I$ as a G-stable model, otherwise it rejects $I$.
\end{proof}

\LemSimpleRewritings*
\begin{proof}
Let $I$ be an interpretation.
$I \models \Pi$ if and only if $I \models C(\Pi)$.
In particular, if $\naf p$ is replaced by an aggregate $A$ in a rule $r$, we have $I \models \naf p$ if and only if $I \models A$.
Note that $I \not\models \naf p$ implies that $r$ is removed in the reducts $G(\Pi,I),G(C(\Pi),I)$, while $I \models \naf p$ implies that both $\naf p$ and $A$ are replaced by the empty set in the rules obtained from $r$ in the reducts.
We therefore conclude that $G(\Pi,I) = G(\Pi,C(I))$, from which we obtain $\mathit{GSM}(\Pi) \equiv_{At(\Pi)} \mathit{GSM}(C(\Pi))$.

The proof of $\mathit{GSM}(\Pi) \equiv_{At(\Pi)} \mathit{GSM}(N(\Pi))$ is similar.
We have just to additionally note that $\bot \notin I$ holds for every $I \in \mathit{GSM}(\Pi) \cup \mathit{GSM}(N(\Pi))$.
\end{proof}

\ThmHardOne*
\begin{proof}
G-coherence testing is $\Sigma^P_2$-hard for ASP($\neg$, $\vee$), and it is NP-hard for ASP($\neg$) \cite{DBLP:journals/amai/EiterG95}.
G-coherence of $\Pi$ can be reduced to G-coherence testing of $C(\Pi)$ or of $N(\Pi)$ because of Lemma~\ref{lem:simple-rewritings}.
Since $C(\Pi)$ and $N(\Pi)$ can be computed in polynomial-time, do not introduce disjunction, eliminate negation, and only have convex and non-convex aggregates, respectively, the proof is complete.
\end{proof}

\ThmHardTwo*
\begin{proof}
G-cautious reasoning over Datalog programs is P-hard \cite{DBLP:journals/amai/EiterG95}.
We reduce this problem to G-coherence testing of disjunction- and negation-free programs with monotone aggregates.
Let $\Pi$ be in ASP($-$), and $p$ be a propositional atom.
Program $\Pi' = \Pi \cup \{p \leftarrow A\}$, where $\mathit{dom}(A) = \{p\}$ and $A(I) = |\{p\} \cap I| \geq 0$, can be built using only logarithmic space.
Since $\Pi$ is a Datalog program, it has a unique G-stable model, say $I$.
If $p \in I$ then $p$ belongs to the least fixpoint of $T_{\Pi}$ because of Lemma~\ref{lem:tp-sem}, and therefore it belongs to the least fixpoint of $T_{\Pi'}$ too because of monotonicity.
On the other hand, if $p \notin I$ then any model $J$ of $\Pi'$ is such that $J \supset I$ because of rule $p \leftarrow A$ (note that $A$ is always true).
We conclude that $G(\Pi',J) = G(\Pi,J) \cup \{p \leftarrow p\}$, and therefore the least fixpoint of $T_{G(\Pi',J)}$, which is equal to the least fixpoint of $T_{G(\Pi,J)}$, is a subset of $I$.
We conclude that $J$ is not a G-stable model of $\Pi'$ and hence $\Pi'$ is G-incoherent.
\end{proof}

\LemMrewriting*
\begin{proof}
Without loss of generality, let us assume that all atoms in $At(\Pi)$ occur negated in $\Pi$ at least once.
Let $I$ be a G-stable model of $\Pi$.
Define $I^F = I \cup \{p^F \mid p \notin I\}$.
We have $I^F \models M(\Pi)$.
Concerning $G(M(\Pi),I^F)$ note that for each $p \in At(\Pi)$ rule $p \vee p^F \leftarrow A$ is either replaced by
\[
p \vee p^F \leftarrow
\]
in case $p \notin I$, or by
\[
p \vee p^F \leftarrow p
\]
if $p \in I$.
In the first case, the rule guarantees that every model $J$ of $G(M(\Pi),I^F)$ such that $J \subseteq I$ satisfies $p^F \in J$.
Hence, rules of $G(M(\Pi),I^F)$ containing $p^F$ can be simplified by removing $p^F$, which essentially results into $G(\Pi,I)$ (plus rules obtained from $p \vee p^F \leftarrow A$).
In the second case, the rule is trivially satisfied by all interpretations, and therefore it can be removed from $G(M(\Pi),I^F)$.
Since $I$ is a minimal model of $G(\Pi,I^F)$, we have that $I^F$ is a minimal model of $G(M(\Pi),I^F)$, i.e., $I^F \in \mathit{GSM}(M(\Pi))$.

For the other direction, let $I$ be a G-stable model of $M(\Pi)$.
We shall show that $I \cap At(\Pi)$ is a G-stable model of $\Pi$.
First of all, note that $I \models A$ for any aggregate $A$ occurring in $M(\Pi)$, and therefore $I \cap \{p,p^F\} \neq \emptyset$ because of rule $p \vee p^F \leftarrow A$, for all $p \in At(\Pi)$.
Moreover, since $I$ is a minimal model of $G(M(\Pi),I)$ by assumption, and $p^F$ does not occur in any other rule heads, we have $|I \cap \{p,p^F\}| = 1$.
We can therefore argument as in the previous direction and conclude that $I \cap At(\Pi)$ is a minimal model of $G(\Pi,I \cap At(\Pi))$, i.e., $I \cap At(\Pi) \in \mathit{GSM}(\Pi)$.

As a final observation, note that also $|\mathit{GSM}(\Pi)| = |\mathit{GSM}(M(\Pi))|$ holds because in any G-stable model of $M(\Pi)$ truth values for atoms of the form $p^F$ are implied by truth values of atoms of the form $p$.
\end{proof}

\ThmHardThree*
\begin{proof}
G-coherence testing is $\Sigma^P_2$-hard for a program $\Pi$ in ASP($\neg$, $\vee$) \cite{DBLP:journals/amai/EiterG95}.
G-coherence of $\Pi$ can be reduced to G-coherence testing of $M(\Pi)$ because of Lemma~\ref{lem:m-rewriting}.
Since $M(\Pi)$ can be computed in polynomial-time, eliminates negation, and only has monotone aggregates, the proof is complete.
\end{proof}

\ThmGcaP*
\begin{proof}
We provide a procedure for checking whether a given propositional atom $p$ is a G-cautious consequence of $\Pi$.
The procedure first checks G-coherence of $\Pi$ in polynomial-time (Theorem~\ref{thm:gco-def-p}).
If $\Pi$ is G-incoherent then the procedure rejects.
Otherwise, because of Lemma~\ref{lem:tp-sem}, the unique G-stable model of $\Pi$, say $I$, is the least fixpoint of $T_\Pi$.
The procedure then computes $I$ in polynomial-time (Lemma~\ref{lem:tp-poly}), and accepts if $p \in I$, otherwise it rejects.
\end{proof}

\ThmGcaConp*
\begin{proof}
Let $\Pi$ be in ASP($\neg$, M, C, N), and $p$ a propositional atom.
We prove that the complementary problem, checking the existence of a G-stable model $I$ of $\Pi$ such that $p \notin I$, is in NP.
To this aim, let $I$ be an interpretation such that $p \notin I$.
The following is a polynomial-time procedure for checking that $I$ is a G-stable model of $\Pi$:
The procedure first builds $G(\Pi,I)$, which is \mbox{disjunction-,} negation and aggregate-free.
Then, it computes the unique G-stable model, say $J$, of $G(\Pi,I)$, i.e., the least fixpoint of $T_{G(\Pi,I)}$ (Lemma~\ref{lem:tp-poly}), and accepts if $I = J$.
\end{proof}

\ThmGcoHard*
\begin{proof}
G-cautious reasoning is $\Pi^P_2$-hard for ASP($\neg$, $\vee$) already for programs in which negation only occurs in a rule of the form $w \leftarrow \naf w$ \cite{DBLP:journals/amai/EiterG95}.
Therefore, let us consider a program $\Pi = \Pi' \cup \{w \leftarrow \naf w\}$, where $\Pi'$ is in ASP($\vee$).
From Lemmas~\ref{lem:simple-rewritings}--\ref{lem:m-rewriting}, $\mathit{GSM}(\Pi) \equiv_{At(\Pi)} \mathit{GSM}(M(\Pi)) \equiv_{At(\Pi)} \mathit{GSM}(C(\Pi)) \equiv_{At(\Pi)} \mathit{GSM}(N(\Pi))$.
Let $p$ be a propositional atom among those in $At(\Pi)$.
It holds that $p$ is a G-cautious consequence of $\Pi$ if and only if $p$ is a G-cautious consequence of the other programs.
Hence, $\Pi^P_2$-hardness follows.

Similarly, G-cautious reasoning for ASP($\neg$) is co-NP-hard already for programs in which negation only occurs in a rule of the form $w \leftarrow \naf w$.
Since $C(\Pi)$ and $N(\Pi)$ are disjunction-free if $\Pi$ is disjunction-free, co-NP-hardness follows.
\end{proof}

\section{Proofs of Section~\ref{sec:compilation}}

\ThmEqRew*
\begin{proof}
Let $I$ be a G-stable model of $\Pi$.
We shall show that $I' = I \cup \{p' \mid p \in At(\Pi)\}$ is an F-stable model of $\mathit{rew}(\Pi)$.
In fact, $I' \models \mathit{rew}(\Pi)$ because $I \models \Pi$.
Consider a model $J \subseteq I$ of the reduct $F(\mathit{rew}(\Pi),I)$.
We have $J \cap At(\Pi) \models G(\Pi,I)$, and therefore $J \cap At(\Pi) = I$ holds because $I$ is a G-stable model of $\Pi$ by assumption.
Because of rules of introduced by item 1 in Definition~\ref{def:rew}, $J \cap At(\Pi) = I$ implies $J = I$, i.e., $I$ is an F-stable model of $\mathit{rew}(\Pi)$.

Let $I$ be an F-stable model of $\mathit{rew}(\Pi)$.
We shall show that $I \cap At(\Pi)$ is a G-stable model of $\Pi$.
First of all, note that $\{p' \mid p \in At(\Pi)\} \subseteq I$ because $I \models \Pi$ and because of rules introduced by item 1 in Definition~\ref{def:rew}.
Therefore, $I \cap At(\Pi) \models \Pi$ follows.
Consider a model $J \subseteq I \cap At(\Pi)$ of the reduct $G(\Pi,I)$.
We have $J \cup \{p' \mid p \in At(\Pi)\} \models F(\mathit{rew}(\Pi),I)$, and therefore $J \cup \{p' \mid p \in At(\Pi)\} = I$ because $I$ is an F-stable model of $\mathit{rew}(\Pi)$ by assumption.
It follows that $J = I \cap At(\Pi)$, i.e., $I \cap At(\Pi)$ is a G-stable model of $\Pi$.

Finally, note that also $|\mathit{GSM}(\Pi)| = |\mathit{FSM}(\mathit{rew}(\Pi))|$ holds because the mappings used above are one-to-one.
\end{proof}

\ThmEqStr*
\begin{proof}
Let $I$ be a G-stable model of $\Pi$.
We shall show that $I' = I \cup \{p' \mid p \in At(\Pi)\} \cup \{p'' \mid p \in I\}$ is an F-stable model of $\mathit{str}(\Pi)$.
In fact, $I' \models \mathit{str}(\Pi)$ because $I \models \Pi$.
Consider a model $J \subseteq I$ of the reduct $F(\mathit{str}(\Pi),I)$.
We have $J \cap At(\Pi) \models G(\Pi,I)$, and therefore $J \cap At(\Pi) = I$ holds because $I$ is a G-stable model of $\Pi$ by assumption.
Because of rules of the groups (i)--(ii) in Definition~\ref{def:str}, $J \cap At(\Pi) = I$ implies $J = I$, i.e., $I$ is an F-stable model of $\mathit{str}(\Pi)$.

Let $I$ be an F-stable model of $\mathit{str}(\Pi)$.
We shall show that $I \cap At(\Pi)$ is a G-stable model of $\Pi$.
First of all, note that $\{p' \mid p \in At(\Pi)\} \subseteq I$ because $I \models \Pi$ and because of rules of the group (i).
Moreover, note that $p \in I$ if and only if $p'' \in I$ because of rules of the group (iii), for all $p \in At(\Pi)$.
And also note that for each aggregate $A''$ occurring in $\mathit{str}(\Pi)$, $I \models A''$ if and only if $I \cap At(\Pi) \models A$.
Therefore, $I \cap At(\Pi) \models \Pi$ follows.
Consider a model $J \subseteq I \cap At(\Pi)$ of the reduct $G(\Pi,I)$, and define $J' = J \cup \{p' \mid p \in At(\Pi)\} \cup \{p'' \mid p \in I\}$.
We have $J' \models F(\mathit{str}(\Pi),I)$, and therefore $J' = I$ because $I$ is an F-stable model of $\mathit{str}(\Pi)$ by assumption.
It follows that $J = I \cap At(\Pi)$, i.e., $I \cap At(\Pi)$ is a G-stable model of $\Pi$.

Finally, note that also $|\mathit{GSM}(\Pi)| = |\mathit{FSM}(\mathit{str}(\Pi))|$ holds because the mappings used above are one-to-one.
\end{proof}

\ThmModular*
\begin{proof}
Immediate because the rewritings work on one rule at time.
\end{proof}

\ThmSizeRew*
\begin{proof}
We first prove relation (i).
Program $\mathit{rew}(\Pi)$ contains 2 rules for each atom in $At(\Pi)$, each one of size 2, and a rule for each rule of $\Pi$.
The number of atoms in these rules is at most two times the number of atoms in the original rules.

We now show relation (ii).
Program $\mathit{rew}(\Pi)$ contains 5 rules for each atom in $At(\Pi)$, each one of size 2, and a rule for each rule of $\Pi$.
The number of atoms in these rules is at most two times the number of atoms in the original rules.
\end{proof}

\ThmSearchSpace*
\begin{proof}[Proof of Theorem~\ref{thm:search-space}]
If $I$ satisfies rules introduced by item 1 in Definition~\ref{def:rew}, or equivalently of the group (i) in Definition~\ref{def:str}, then $\{p' \mid p \in At(\Pi)\} \subseteq I$.
Consider a model $J \subseteq I$ of the reduct $F(\mathit{str}(\Pi),I)$.
For each $p'' \in I$, $F(\mathit{str}(\Pi),I)$ contains a rule $p'' \leftarrow$ because of rules of the group (ii) in Definition~\ref{def:str}.
\end{proof}

\ThmStratified*
\begin{proof}
We first provide a more formal definition of stratified aggregate.
The \emph{dependency graph} of $\Pi$ has a node $p$ for each atom $p \in \mathit{At}(\Pi)$, and an arc from $q$ to $p$ if there is a rule $r \in \Pi$ such that $p \in H(r)$ and $q$ occurs in $B(r)$, either as a possibly negated literal or in the domain of an aggregate.
$\Pi$ is stratified with respect to aggregates if there is no rule $r \in \Pi$ such that $p \in H(r)$ and $q$ occurring in $B(r)$ belong to the same \emph{strongly connected component} of $\Pi$.

Let $\Pi$ be a program, and $A$ be an aggregate in $\mathit{str}(\Pi)$.
Hence, by construction, $\mathit{dom}(A) \subseteq \{p'' \mid p \in At(\Pi)\}$.
Note that all rules whose head contains some atom in $\mathit{dom}(A)$ belong to the group (ii) in Definition~\ref{def:str}, and therefore each atom $p'' \in \mathit{dom}(A)$ belongs to a singleton strongly connected component.
Stratification of aggregates in $\mathit{str}(\Pi)$ is thus proved.

Let $\Pi$ be a program without disjunction.
Program $\mathit{rew}(\Pi)$ and $\mathit{str}(\Pi)$ contain rules of the groups (i)--(iii), which have no disjunction, and rules obtained from those in $\Pi$ by replacing aggregates.
Hence, neither $\mathit{rew}(\Pi)$ nor $\mathit{str}(\Pi)$ has disjunction.
\end{proof}

\end{document}